\newtheorem{lemma}{Lemma}
\newtheorem{theorem}{Theorem}
\newtheorem{assumption}{Assumption}
\def\BibTeX{{\rm B\kern-.05em{\sc i\kern-.025em b}\kern-.08em T\kern-.1667em\lower.7ex\hbox{E}\kern-.125emX}}
\newenvironment{myframed}[1][]{
  \begin{mdframed}[backgroundcolor=blue!10!white, linecolor=black!75, linewidth=0.6pt, roundcorner=5pt]
    #1
}{
  \end{mdframed}
}
\begin{document}

\title{GDBR: Label Recovery Attack Against Partial Gradient Encryption\\ in Federated Learning}

\author{
  \IEEEauthorblockN{Rui Zhang$^\dagger$}
  \thanks{$^\dagger$\,This work was completed while the author was pursuing his Ph.D. at The Hong Kong Polytechnic University.}
  \IEEEauthorblockA{
    \textit{HKU Musketeers Foundation Institute of Data Science} \\
    \textit{The University of Hong Kong} \\
    Hong Kong SAR, China \\
    csrzhang@hku.hk}
  \and
  \IEEEauthorblockN{Ka-Ho Chow$^*$}
  \thanks{$^*$\,Corresponding author.}
  \IEEEauthorblockA{
    \textit{School of Computing and Data Science} \\
    \textit{The University of Hong Kong} \\
    Hong Kong SAR, China \\
    kachow@cs.hku.hk}
}

\maketitle

\begin{abstract}
  The increasing demand for data privacy, alongside the benefits of aggregating data from networked devices, has catalyzed the emergence of federated learning (FL). In FL, clients jointly train a global model by sharing gradients computed over private data. While this paradigm eliminates the need to exchange raw data, inference attacks can still be launched to extract sensitive information from gradients. To this end, partial gradient encryption has emerged as a promising design for balancing privacy and efficiency in practical FL systems, as encrypting only the classification-head gradients is believed to prevent known inference attacks while avoiding the high computational cost of encrypting the entire model. However, this design provides a false sense of privacy. By proposing GDBR, we show that sharing even a single unencrypted layer of gradients can lead to serious privacy leakage. GDBR is the first attack capable of high-fidelity label recovery with partial access to the gradients. It exploits a vulnerability in a commonly used neural building block, constructs a gradient bridge from the unencrypted layer to the final output layer, and approximates the logits information for accurate inference of private labels. These inferred labels not only reveal sensitive information about a client's private dataset but also serve as a prerequisite for many downstream attacks, such as data reconstruction and membership inference. GDBR brings these threats squarely into scope for FL systems employing partial encryption. In addition to theoretical analysis, extensive experiments demonstrate the severity of the problem across a wide variety of datasets and model architectures, including convolutional and transformer-based networks. Overall, our findings challenge the widespread assumption that encrypting only the output layer suffices for privacy protection. Our source code is available at: \url{https://github.com/csrzhang/GDBR}.
\end{abstract}

\begin{IEEEkeywords}
  Federated Learning, Gradient Inversion, Label Recovery, Privacy Leakage
\end{IEEEkeywords}

\section{Introduction}

\begin{figure}[htbp]
  \centering
  \includegraphics[width=0.46\textwidth]{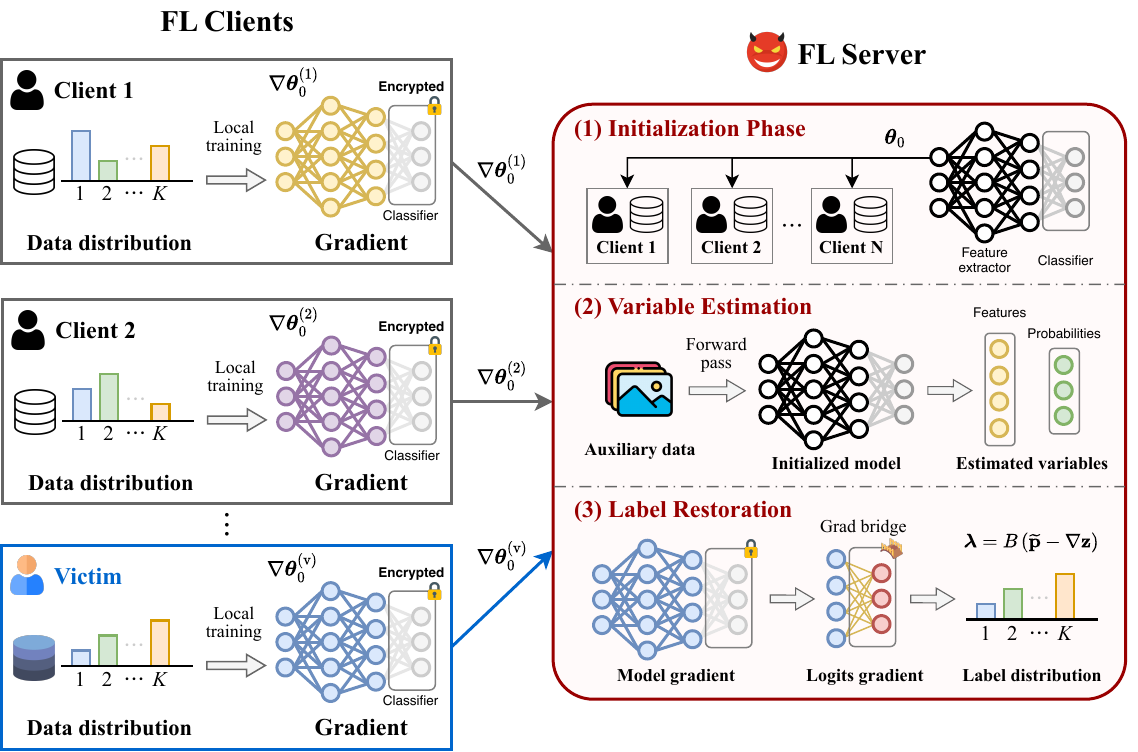}
  \caption{Overview of the GDBR attack. There are $N$ clients participating in FL, and each holds a private dataset with its own data distribution. The FL server is a semi-honest adversary and attempts to infer the label distribution of a victim client by exploiting the partially encrypted gradients. The GDBR attack proceeds as follows: (1) The FL server initializes and distributes the global model to the clients. (2) The adversary estimates the input features and output probabilities of the classification head using auxiliary information. (3) The adversary builds a \emph{gradient bridge} to infer the gradients with respect to the output logits from the accessible gradients, and then recovers the victim's label distribution using the derived formula.}
  \label{fig:overview}
\end{figure}

Federated learning (FL) \cite{konevcny2016federated,mcmahan2017communication,bonawitz2019towards} has emerged as a promising privacy-preserving paradigm that enables multiple clients to collaboratively train a machine learning model without directly exposing their raw data. In each communication round, the server distributes the current global model to the participating clients. Each client then computes gradients on their private data and sends the local gradients back to the server, which incorporates these updates into the global model. This iterative process continues until the model reaches satisfactory performance. Although exchanging gradients instead of raw data makes FL appear privacy-preserving, it has become widely recognized that sensitive information can still be inferred from gradients \cite{wei2020framework,zhang2023survey}. Among these threats, label recovery is particularly severe because it is not only a direct privacy leakage but also a prerequisite for many other attacks.

Label recovery aims to reconstruct the private labels directly from the shared gradients. Existing methods leverage the gradients of the last fully-connected (FC) layer to infer labels \cite{zhao2020idlg,dang2021revealing,yin2021see,geng2021towards,wainakh2022user,ma2023instancewise}. With these labels, gradient inversion attacks \cite{zhao2020idlg,dang2021revealing} can reconstruct high-fidelity training samples for the target batch, user-level membership inference attacks \cite{wang2024graddiff} can identify whether specific gradients originate from a particular client's data, and poisoning attacks \cite{mukhtiar2025fairness} can compromise the fairness of the global model by injecting bias into the training process.

Secure FL has been considered a mitigation for the privacy risks associated with label leakage. Since the FL server may extract sensitive information from gradients, clients can use homomorphic encryption to encrypt their gradients and transmit the ciphertexts to the server for aggregation \cite{zhang2020batchcrypt,fang2021privacy}. While this prevents the server from launching analytical or optimization-based attacks using the shared gradients, the computational cost is prohibitively high.
To balance privacy and efficiency, partial encryption has been employed to selectively encrypt only the model's output layer, where label-sensitive gradients reside. By sharing the remaining gradients in plaintext, this approach significantly reduces computational overhead while allowing the number of encrypted layers to be dynamically adjusted depending on the desired privacy-utility trade-off.

However, in this work we dismantle this widespread belief and reveal a severe vulnerability: even when the output layer's gradients are completely inaccessible, the shared gradients of an intermediate layer still leak precise private label information through inter-layer correlations. A single layer's gradient is sufficient for high-fidelity label recovery. To demonstrate the false sense of privacy offered by FL with partial encryption, we propose GDBR, the first attack capable of high-fidelity restoration of private labels under such challenging settings.

We observe that the input features and output probabilities have similar distributions across different samples, where each dimension follows an approximately Gaussian distribution. Thus, GDBR estimates the input features and output probabilities using auxiliary information, such as public datasets or fully generated dummy data. Based on this, GDBR constructs a gradient bridge that systematically propagates information across the model to approximate the missing logit gradients. From these approximated logit gradients, GDBR directly recovers the batch labels based on our theoretical findings. Fig.~\ref{fig:overview} illustrates the workflow of GDBR attack.

Our findings invalidate the core security assumption behind the partial encryption design and reveal a universal vulnerability in deep networks. As deep networks proliferate, similar inter-layer bridge vulnerabilities are poised to emerge ubiquitously, compelling the community to devise next-generation defenses that rigorously balance privacy and efficiency.

Our main contributions are summarized as follows:
\begin{itemize}
	\item We propose GDBR, the first attack achieving high-fidelity label reconstruction in federated learning even when the gradients of the head layer of the classifier are completely missing.
	\item We uncover and formally characterize the gradient bridge phenomenon: inter-layer correlations that inevitably leak logit-gradient information into earlier layers, providing fundamental new insights into gradient privacy in deep networks.
	\item We highlight the inadequacy of existing defenses, underscoring the need for more robust privacy-preserving techniques in FL systems.
\end{itemize}

Extensive experiments on diverse datasets, model architectures, and batch sizes (up to 512) demonstrate that GDBR achieves strong label recovery accuracies, even outperforming existing label attacks that have access to final FC layer's gradients in some settings. We further evaluate the robustness of GDBR under class distribution skew, randomly generated auxiliary data, additional defense mechanisms on the shared feature gradients, and transformer-based neural architectures. The results show that GDBR maintains strong performance in all scenarios. Overall, GDBR invalidates the core security assumption of the prevalent partial encryption design and urge the community to develop next-generation defenses that rigorously account for inter-layer leakage in deep networks.

\section{Related Work}

In this section, we introduce the related works on gradient inversion attacks, analytical label recovery attacks, and existing encryption-based defenses in FL.

\subsection{Gradient Inversion Attacks}

Gradient inversion attacks (GIAs) \cite{zhang2023survey} seek to reconstruct private training data by exploiting the exchanged gradients in FL. The clients share model updates instead of raw data to protect privacy; however, gradients remain susceptible to sensitive information leakage, making them attractive targets for adversarial exploitation.

Zhu et al.~\cite{zhu2019deep} propose the {DLG} attack, which iteratively optimizes dummy inputs and labels to minimize the gap between generated and ground-truth gradients. Geiping et al.~\cite{geiping2020inverting} enhance DLG by adopting cosine similarity as the loss function and adding total variation regularization, improving the fidelity of reconstructed images. Yin et al.~\cite{yin2021see} introduce group consistency regularization to better preserve object locations. Zhu and Blaschko~\cite{zhu2021rgap} present a recursive attack that recovers layer-wise features by solving linear systems. 

Beyond computer vision, similar inversion techniques have been applied to natural language processing~\cite{gupta2022recovering,balunovic2022lamp} and speech recognition~\cite{dang2022method}, indicating that gradient leakage poses a broad privacy risk across modalities.

\subsection{Analytical Label Distribution Recovery}

Analytical label recovery attacks aim to infer private label information from shared gradients in FL, often without reconstructing the input data itself. 
Since the gradients of the final fully-connected (FC) layer are highly correlated with the one-hot encoded labels, adversaries can exploit this property to recover label distributions.

Zhao et al.~\cite{zhao2020idlg} introduce {iDLG}, which leverages the relationship between the last layer's gradients and the label encoding to directly recover labels. 
Building on this, Yin et al.~\cite{yin2021see} propose {GI}, identifying the rows with minimum values in the final FC layer's gradient as the target class. 
Dang et al.~\cite{dang2021revealing} present {RLG}, which infers the target class by analyzing the columns of the gradient with respect to the output layer's weights. 
Geng et al.~\cite{geng2021towards} develop {ZLG}, using auxiliary data to estimate posterior probabilities and extract last-layer features for classwise label restoration. 
Wainakh et al.~\cite{wainakh2022user} design {LLG}, combining both the direction and magnitude of gradients to compute an overall impact factor and classwise offset, enabling sequential label recovery. 
Ma et al.~\cite{ma2023instancewise} propose {iLRG}, which derives embedding features and softmax probabilities from the output layer and solves linear systems to recover batch-wise labels.

Despite their differences, all these attacks rely fundamentally on access to the gradients of the final FC layer. 
Notably, iDLG, GI, and RLG are restricted to recovering non-repeating labels in the training batch, while iLRG requires additional gradients with respect to the biases for accurate feature restoration.



\subsection{Existing Defenses Against Label Recovery}

Recent studies have explored the use of cryptographic techniques to counter gradient-based batch label recovery in collaborative learning. A central idea in these works is that the parameters most closely tied to the output layer, typically the classification head or the last few layers, can be protected more efficiently through \emph{selective} or \emph{partial} encryption rather than encrypting the entire model.

FedML-HE \cite{jin2023fedml} employs homomorphic encryption on a sensitivity-identified subset of model parameters, demonstrating that protecting only the components most exposed to gradient-level leakage can significantly reduce computation while maintaining privacy. 
MaskCrypt \cite{hu2024maskcrypt} introduces a client-negotiated masking protocol, enabling homomorphic encryption of chosen parameter subsets such as final-layer weights, which typically influence gradient observability the most. 
SHE-LoRA \cite{liu2025shelora} adopts a sensitivity-ranking mechanism to determine which parameters should be encrypted, naturally covering scenarios where the last-layer parameters require stronger protection. 
SenseCrypt \cite{li2025sensecrypt} further refines this direction by adaptively selecting which parameters to encrypt based on sensitivity clustering, where more sensitive components that encode label information are protected with homomorphic encryption.

Collectively, these works show a converging trend: protecting a chosen portion of the model, that is, the classification head, offers an effective defense strategy in settings where gradients may inadvertently reveal label information.

\section{Preliminaries}

In this section, we present the preliminaries of GDBR, including the notations used throughout this paper, the gradient correlations in typical neural network layers, the correspondence between model's logits and one-hot labels, and the assumed threat model in FL.

\subsection{Notations}

In Table \ref{tab:notations}, we summarize the notations used throughout this paper for clarity and ease of reference.

\begin{table}[htbp]
  \centering
  \caption{Summary of Notations}
  \label{tab:notations}
  \renewcommand{\arraystretch}{1.15}
  \begin{adjustbox}{width=0.92\columnwidth}
    \begin{tabular}{cl}
      \toprule
      \textbf{Notation} & \textbf{Description} \\
      \midrule
      $\mathcal{L}$ & Cross-entropy loss function \\
      $\bm{\theta}_\text{0}$ & Initialized global model parameters \\
      $\mathbf{W}$ & Weights of the FC/Conv layer \\
      $\mathbf{x} / \mathbf{X}$ & Input features of the FC/Conv layer \\
      $\mathbf{z} / \mathbf{Z}$ & Output values of the FC/Conv layer \\
      $\mathbf{a} / \mathbf{A}$ & Output activations of the FC/Conv layer \\
      $\overline{\nabla\bm{\theta}}_\text{0}$ & Gradient of the initialized global model \\
      $\nabla\mathbf{W}$ & Gradient with respect to weights \\
      $\nabla\mathbf{x} / \nabla\mathbf{X}$ & Gradient with respect to input features \\
      $\nabla\mathbf{z} / \nabla\mathbf{Z}$ & Gradient with respect to output values \\
      $\nabla\mathbf{a} / \nabla\mathbf{A}$ & Gradient with respect to output activations \\
      $\mathbf{p}$ & Softmax probabilities \\
      $\mathbf{y}$ & One-hot encoded labels \\
      $C$ & Number of classes \\
      $B$ & Batch size of training data \\
      $\mathcal{D}_\text{aux}$ & Auxiliary dataset for estimation \\
      \bottomrule
    \end{tabular}
  \end{adjustbox}
\end{table}

\subsection{Gradient Correlations in Typical Layers}

We begin by outlining the fundamental relationships and formulations of gradients in common neural network layers, namely the fully-connected (FC) layer, convolutional (Conv) layer, and ReLU activation layer. For simplicity, bias terms are omitted in the following discussion.

\subsubsection{FC Layer}

The forward propagation in an FC layer can be formulated as $\mathbf{z} = \mathbf{W} \mathbf{x}$, where $\mathbf{z} \in \mathbb{R}^{N}$ denotes the output vector, $\mathbf{W} \in \mathbb{R}^{N \times M}$ represents the weight matrix, and $\mathbf{x} \in \mathbb{R}^{M}$ denotes the input feature vector.

During backpropagation, the gradient of the loss function $\mathcal{L}$ with respect to the weight matrix $\mathbf{W}$ is given by:
\begin{equation}
  \label{eq:fc_grad_w}
  \nabla\mathbf{W} = \frac{\partial \mathcal{L}}{\partial \mathbf{W}} = \frac{\partial \mathcal{L}}{\partial \mathbf{z}} \frac{\partial \mathbf{z}}{\partial \mathbf{W}} = \nabla\mathbf{z} \mathbf{x}^{\top},
\end{equation}
where $\nabla\mathbf{z} \in \mathbb{R}^{N}$ denotes the gradient of the loss function with respect to the output $\mathbf{z}$ of the layer.

Similarly, the gradient of the loss function $\mathcal{L}$ relative to the input features $\mathbf{x}$ can be derived as:
\begin{equation}
  \label{eq:fc_grad_x}
  \nabla\mathbf{x} = \frac{\partial \mathcal{L}}{\partial \mathbf{x}} = \frac{\partial \mathcal{L}}{\partial \mathbf{z}} \frac{\partial \mathbf{z}}{\partial \mathbf{x}} = \mathbf{W}^{\top} \nabla\mathbf{z}.
\end{equation}

\subsubsection{Conv Layer}

In the Conv layer, the weights, also called kernels, are shared across different spatial locations of the input feature maps. A kernel is typically represented as a 4D tensor, denoted as $\mathbf{W} \in \mathbb{R}^{C_{\text{out}} \times C_{\text{in}} \times H_{\text{k}} \times W_{\text{k}}}$, where $C_{\text{in}}$ and $C_{\text{out}}$ are the numbers of input and output channels, respectively, and $H_{\text{k}}$ and $W_{\text{k}}$ represent the height and width of the kernel.
The forward pass can be expressed as $\mathbf{Z} = \mathbf{W} \ast \mathbf{X}$, where $\mathbf{Z} \in \mathbb{R}^{C_{\text{out}} \times H_{\text{out}} \times W_{\text{out}}}$ is the output feature map, $\mathbf{X} \in \mathbb{R}^{C_{\text{in}} \times H_{\text{in}} \times W_{\text{in}}}$ is the input feature map, and $\ast$ denotes the convolution operation. Unfolding the convolutional operation, a specific element $\mathrm{Z}_{k, i, j}$ in the output $\mathbf{Z}$ is computed as follows:
\begin{equation}
  \begin{aligned}
    \mathrm{Z}_{k, i, j}
     & = \sum_{c=1}^{C_{\text{in}}} \sum_{h=1}^{H_{\text{k}}} \sum_{w=1}^{W_{\text{k}}} \mathrm{W}_{k, c, h, w} \, \mathrm{X}_{c, i+h-1, j+w-1} \\
     & = \sum_{c, h, w} \mathrm{W}_{k, c, h, w} \, \mathrm{X}_{c, i+h-1, j+w-1},
  \end{aligned}
\end{equation}
where $k$ indexes the output channel, $i$ and $j$ index the spatial position of the output, $c$ indexes the input channels, and $h$ and $w$ index the spatial dimensions of the kernel. For brevity, we use the Einstein summation convention, such as $\sum_{c, h, w}$, to simplify the notation.

During the backward pass, the gradient with respect to the element $\mathrm{W}_{k, c, h, w}$ of the kernel $\mathbf{W}$ can be given by:
\begin{equation}
  \label{eq:conv_grad_w}
  \begin{aligned}
    \nabla\mathrm{W}_{k, c, h, w}
     & = \sum_{i=1}^{H_{\text{out}}} \sum_{j=1}^{W_{\text{out}}} \nabla\mathrm{Z}_{k, i, j} \, \mathrm{X}_{c, i+h-1, j+w-1} \\
     & = \sum_{i, j} \nabla\mathrm{Z}_{k, i, j} \, \mathrm{X}_{c, i+h-1, j+w-1}.
  \end{aligned}
\end{equation}

\subsubsection{ReLU Activation Layer}

The Rectified Linear Unit (ReLU) is the most widely used activation function in deep neural networks. The ReLU activation function is defined as $\mathrm{a} = \mathrm{ReLU}(\mathrm{z}) = \max(0, \mathrm{z})$, where $\mathrm{a}$ is the output, $\mathrm{z}$ is the input, and $\max(\cdot)$ denotes the maximum operation. The derivative of the output $\mathrm{a}$ with respect to the input $\mathrm{z}$ can be expressed as:
\begin{equation}
  \sigma'(\mathrm{z}) = \frac{\mathrm{d} \mathrm{a}}{\mathrm{d} \mathrm{z}} = 
  \left\{
  \begin{array}{@{}c@{\quad}l@{}}
    1, & \text{if } \mathrm{z} > 0,\\
    0, & \text{otherwise}.
  \end{array}
  \right.
\end{equation}

During backpropagation, the gradient with respect to an element $\mathrm{z}_k$ in the input vector $\mathbf{z}$ can be computed as:
\begin{equation}
  \label{eq:relu_grad_z}
  \nabla\mathrm{z}_k = \frac{\partial \mathcal{L}}{\partial \mathrm{a}_k} \frac{\partial \mathrm{a}_k}{\partial \mathrm{z}_k} =
  \left\{
    \begin{array}{@{}c@{\quad}l@{}}
      \nabla\mathrm{a}_k, & \text{if } \mathrm{z}_k > 0,\\
      0, & \text{otherwise}.
    \end{array}
  \right.
\end{equation}

\subsection{Relationship Between Logits and Labels}
\label{sec:logits_labels}

In a classification task with $C$ output classes, the cross-entropy loss function $\mathcal{L}$ is widely used to measure the discrepancy between the predicted Softmax probabilities $\mathbf{p} \in \mathbb{R}^{C}$ and the target one-hot labels $\mathbf{y} \in \mathbb{R}^{C}$. The cross-entropy loss is defined as:
\begin{equation}
  \label{eq:cross_entropy}
  \mathcal{L} = - \sum_{i=1}^{C} \mathrm{y}_i \log(\mathrm{p}_i) = -\log \frac{\exp(\mathrm{z}_c)}{\sum_{j=1}^{C} \exp(\mathrm{z}_j)},
\end{equation}
where $c$ represents the index of the target class, $\mathrm{z}_i$ is the $i$-th element of the output logits $\mathbf{z}$, and $\mathrm{p}_i = \frac{\exp(\mathrm{z}_i)}{\sum_{j=1}^{C} \exp(\mathrm{z}_j)}$ is the Softmax probability for class $i$.

During backward propagation, the gradient of the loss function $\mathcal{L}$ with respect to the logit $\mathrm{z}_i$ is derived as:
\begin{equation}
  \label{eq:cross_entropy_grad}
  \begin{aligned}
    \frac{\partial \mathcal{L}}{\partial \mathrm{z}_i}
    & = -\frac{\partial \log \exp(\mathrm{z}_c)}{\partial \mathrm{z}_i} + \frac{\partial \log \sum_{j=1}^{C} \exp(\mathrm{z}_j)}{\partial \mathrm{z}_i} \\
    & = -\delta_{ic} + \frac{\exp(\mathrm{z}_i)}{\sum_{j=1}^{C} \exp(\mathrm{z}_j)},
  \end{aligned}
\end{equation}
where $\delta_{ic}$ denotes the Kronecker delta function, which equals 1 when $i=c$ and 0 otherwise. Note that $y_i = \delta_{ic}$, so the Kronecker delta simply represents the one-hot label. Therefore, the gradients with respect to the logits vector $\mathbf{z}$ can be rewritten as:
\begin{equation}
  \label{eq:prob_label_grad}
  \nabla\mathbf{z} = \frac{\partial \mathcal{L}}{\partial \mathbf{z}} = \mathbf{p} - \mathbf{y}.
\end{equation}

During standard training, Eq.~\eqref{eq:prob_label_grad} reveals that the gradients $\nabla\mathbf{z}$ capture the difference between the predicted probabilities $\mathbf{p}$ and the ground-truth one-hot labels $\mathbf{y}$. By rearranging this relationship, the labels can be expressed as a function of the gradients and probabilities:
\begin{equation}
  \label{eq:label_from_grad}
  \mathbf{y} = \mathbf{p} - \nabla\mathbf{z}.
\end{equation}

Eq.~\eqref{eq:label_from_grad} shows that, if an adversary is able to observe or estimate both the gradients $\nabla\mathbf{z}$ and the probabilities $\mathbf{p}$, they can directly reconstruct the labels $\mathbf{y}$. This insight provides the theoretical foundation for label recovery attacks that leverage gradient information in FL.

\subsection{Threat Model}

We analyze a representative FL setting based on the FedSGD algorithm \cite{mcmahan2017communication}, which has been widely adopted in prior gradient inversion attacks \cite{zhu2019deep,yin2021see,geng2021towards,wainakh2022user}. In our scenario, the FL protocol employs encryption mechanisms to protect the gradients of the classification head's final layer, ensuring this component remains confidential during training.

We assume the FL server itself acts as the adversary, attempting to infer clients' label distributions from the available gradient information. The adversary operates in a white-box setting, with full knowledge of the model architecture, training hyperparameters (e.g., batch size), and the overall FL protocol. Since the server coordinates the training procedure, they can modify the global model prior to distribution. 


At the start of FL training, the server initializes and distributes the complete set of model parameters to all clients to accelerate convergence. During the first communication round, the server retains access to the entire parameter set. In subsequent rounds, however, encryption of the classification head restricts the server to receiving only gradients from the feature extractor. Consequently, label recovery attacks are feasible primarily during the initialization phase, when the server can access the full model parameters. Furthermore, all client updates are transmitted directly to the server in a point-to-point manner without secure aggregation, allowing a malicious server to inspect each client's gradients individually.

The adversary can exploit auxiliary resources, such as public datasets or statistical estimates of intermediate features, to enhance the effectiveness of attacks. Leveraging such information can refine reconstruction strategies and improve the accuracy of inferring clients' label distributions.

\section{Methodology}

In this section, we introduce our GDBR attack for label distribution leakage. We begin by outlining the theoretical foundations of GDBR, including the relationships between layer-wise gradients and the gradients with respect to output logits. Subsequently, we provide a detailed formulation for batch label restoration, utilizing estimated variables to enhance the attack's effectiveness.

\subsection{Overview}

To utilize Eq.~\eqref{eq:label_from_grad} for label recovery, the adversary must infer the gradients $\nabla\mathbf{z}$ with respect to the logits from the observable feature gradients, and further approximate variables (e.g., probabilities $\mathbf{p}$) based on model parameters and auxiliary data. In practice, the adversary has access to the gradients of the feature extractor, but GDBR requires only the gradient of the extractor's final layer. For representative model architectures, we consider two categories:
\begin{itemize}
  \item \textbf{MLP models}: Although the final FC layer is typically regarded as the classification head, in some cases the head may consist of multiple stacked FC layers. In MLPs, our analysis focuses on the stacked structure of multiple FC layers.
  \item \textbf{CNN models}: For CNNs, we examine AlexNet, LeNet, ResNet, and VGG, in which the last pooling layer is replaced by a depthwise convolution. These architectures consist of one DW convolution followed by one to three FC layers.
\end{itemize}

In the following, we analyze how to construct gradient bridges between layers in these settings, enabling the step-by-step inference of the logits gradient $\nabla\mathbf{z}$.

\subsection{Theoretical Foundations: Layer-Wise Gradient Correlations in FC, Conv, and ReLU Layers}

In this subsection, we derive the correlations between gradients in common neural network layers, including FC, Conv, and ReLU layers. These analytical results provide the theoretical foundation for the GDBR attack, enabling label recovery from partial gradient information in FL.

\begin{lemma}
  \label{lemma:fc_basic}
  In an FC layer, let $\nabla\mathbf{x}$, $\nabla\mathbf{W}$, and $\nabla\mathbf{z}$ denote the gradients with respect to the input $\mathbf{x}$, the weights $\mathbf{W}$, and the output $\mathbf{z}$, respectively. The following relationships then hold:
  \begin{align}
    \nabla\mathbf{x} \mathbf{x}^{\top} & = \mathbf{W}^{\top} \nabla\mathbf{W} \label{eq:fc_gx_xT}, \\
    \nabla\mathbf{z} \mathbf{z}^{\top} & = \nabla\mathbf{W} \mathbf{W}^{\top} \label{eq:fc_gz_zT}, \\
    \nabla\mathbf{z} \odot \mathbf{z} & = \mathrm{diag}(\nabla\mathbf{W} \mathbf{W}^{\top}), \label{eq:fc_gz_odot_z}
  \end{align}
  where $\odot$ denotes the element-wise product, and $\mathrm{diag}(\cdot)$ denotes the diagonal elements of a matrix.
\end{lemma}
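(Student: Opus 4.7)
The plan is to reduce each of the three identities to the forward-pass relation $\mathbf{z} = \mathbf{W}\mathbf{x}$ together with the two gradient formulas already established in the preliminaries, namely $\nabla\mathbf{W} = \nabla\mathbf{z}\,\mathbf{x}^{\top}$ from \eqref{eq:fc_grad_w} and $\nabla\mathbf{x} = \mathbf{W}^{\top}\nabla\mathbf{z}$ from \eqref{eq:fc_grad_x}. No new calculus is needed; the entire lemma is a bookkeeping exercise in matrix algebra, so I would dispatch the three identities in order of difficulty and reuse earlier ones where possible.

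For \eqref{eq:fc_gx_xT}, I would substitute $\nabla\mathbf{x} = \mathbf{W}^{\top}\nabla\mathbf{z}$ into the left-hand side and use associativity, $\nabla\mathbf{x}\,\mathbf{x}^{\top} = \mathbf{W}^{\top}(\nabla\mathbf{z}\,\mathbf{x}^{\top}) = \mathbf{W}^{\top}\nabla\mathbf{W}$, with the last equality being \eqref{eq:fc_grad_w}. For \eqref{eq:fc_gz_zT}, I would transpose the forward pass to write $\mathbf{z}^{\top} = \mathbf{x}^{\top}\mathbf{W}^{\top}$ and regroup: $\nabla\mathbf{z}\,\mathbf{z}^{\top} = (\nabla\mathbf{z}\,\mathbf{x}^{\top})\mathbf{W}^{\top} = \nabla\mathbf{W}\,\mathbf{W}^{\top}$. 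For \eqref{eq:fc_gz_odot_z}, the key observation is that for any two column vectors $\mathbf{u},\mathbf{v}$ of matching length, $(\mathbf{u}\mathbf{v}^{\top})_{ii} = \mathrm{u}_i\mathrm{v}_i$, so $\mathrm{diag}(\nabla\mathbf{z}\,\mathbf{z}^{\top}) = \nabla\mathbf{z}\odot\mathbf{z}$; combining this with \eqref{eq:fc_gz_zT} immediately yields the claim.

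The main ``obstacle'', if one can call it that, is simply keeping shapes and transpose placements straight: $\nabla\mathbf{W}$ is $N\times M$ like $\mathbf{W}$, whereas $\nabla\mathbf{z}\,\mathbf{z}^{\top}$ and $\nabla\mathbf{W}\,\mathbf{W}^{\top}$ are $N\times N$, and $\nabla\mathbf{x}\,\mathbf{x}^{\top}$ and $\mathbf{W}^{\top}\nabla\mathbf{W}$ are $M\times M$. Once dimensional compatibility is verified, each identity collapses to a single line of manipulation, and the diagonal identity in particular follows for free from \eqref{eq:fc_gz_zT} rather than having to be proven from scratch.
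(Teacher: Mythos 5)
Your proposal is correct and follows essentially the same route as the paper: both derive \eqref{eq:fc_gx_xT} by right-multiplying $\nabla\mathbf{x}=\mathbf{W}^{\top}\nabla\mathbf{z}$ by $\mathbf{x}^{\top}$ and invoking \eqref{eq:fc_grad_w}, derive \eqref{eq:fc_gz_zT} by right-multiplying \eqref{eq:fc_grad_w} by $\mathbf{W}^{\top}$ and using $\mathbf{z}^{\top}=\mathbf{x}^{\top}\mathbf{W}^{\top}$, and obtain \eqref{eq:fc_gz_odot_z} by reading off the diagonal of \eqref{eq:fc_gz_zT}. If anything, your justification of the last step --- that $(\mathbf{u}\mathbf{v}^{\top})_{ii}=\mathrm{u}_i\mathrm{v}_i$ for any outer product --- is cleaner than the paper's appeal to symmetry, which is neither needed nor generally true for $\nabla\mathbf{z}\,\mathbf{z}^{\top}$.
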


\begin{proof}
  By multiplying $\mathbf{x}^{\top}$ on both sides of Eq.~\eqref{eq:fc_grad_x}, and substituting Eq.~\eqref{eq:fc_grad_w} into the result, we can derive:
  \begin{equation*}
    \nabla\mathbf{x} \mathbf{x}^{\top} = \mathbf{W}^{\top} \nabla\mathbf{z} \mathbf{x}^{\top} = \mathbf{W}^{\top} \nabla\mathbf{W}.
  \end{equation*}

  Similarly, if we multiply $\mathbf{W}^{\top}$ on both sides of Eq.~\eqref{eq:fc_grad_w}, we can obtain the following equation:
  \begin{equation*}
    \nabla\mathbf{W} \mathbf{W}^{\top} = \nabla\mathbf{z} \mathbf{x}^{\top} \mathbf{W}^{\top} = \nabla\mathbf{z} \left( \mathbf{W} \mathbf{x} \right)^{\top} = \nabla\mathbf{z} \mathbf{z}^{\top}.
  \end{equation*}

  Since $\nabla\mathbf{z} \mathbf{z}^{\top}$ is symmetric, i.e. $\left( \nabla\mathbf{z} \mathbf{z}^{\top} \right)^{\top} = \nabla\mathbf{z} \mathbf{z}^{\top}$, its diagonal entries correspond to the element-wise products of $\mathbf{z}$ with its gradient $\nabla\mathbf{z}$. Therefore, we have:
  \begin{equation*}
    \nabla\mathbf{z} \odot \mathbf{z} = \mathrm{diag}( \nabla\mathbf{z} \mathbf{z}^{\top} ) = \mathrm{diag}( \nabla\mathbf{W} \mathbf{W}^{\top} ),
  \end{equation*}
  where $\odot$ denotes the element-wise product.
\end{proof}

\begin{lemma}
  \label{lemma:fc_grad}
  In an FC layer, given the weight parameters $\mathbf{W}$ and the input feature gradient $\nabla\mathbf{x}$, the output gradient $\nabla\mathbf{z}$ can be expressed as follows:
  \begin{equation}
    \label{eq:fc_gz_new}
    \nabla\mathbf{z} = (\mathbf{W} \mathbf{W}^{\top})^{-1} \mathbf{W} \nabla\mathbf{x}.
  \end{equation}
\end{lemma}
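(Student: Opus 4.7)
The plan is to start from the backward-propagation identity already recorded as Eq.~\eqref{eq:fc_grad_x}, namely $\nabla\mathbf{x} = \mathbf{W}^{\top}\nabla\mathbf{z}$, and invert it for $\nabla\mathbf{z}$. Since $\mathbf{W}^{\top}\in\mathbb{R}^{M\times N}$ is in general not square (typically $M>N$ for an FC layer that reduces dimensionality), this system is overdetermined in $\nabla\mathbf{z}$, and the natural tool is the left pseudoinverse of $\mathbf{W}^{\top}$, which is precisely $(\mathbf{W}\mathbf{W}^{\top})^{-1}\mathbf{W}$.

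Concretely, I would first left-multiply both sides of $\nabla\mathbf{x} = \mathbf{W}^{\top}\nabla\mathbf{z}$ by $\mathbf{W}$ to get the square $N\times N$ normal-equation form
\begin{equation*}
  \mathbf{W}\,\nabla\mathbf{x} \;=\; \mathbf{W}\mathbf{W}^{\top}\,\nabla\mathbf{z}.
\end{equation*}
Then, assuming $\mathbf{W}\mathbf{W}^{\top}\in\mathbb{R}^{N\times N}$ is invertible, I would multiply both sides on the left by $(\mathbf{W}\mathbf{W}^{\top})^{-1}$ to arrive at the claimed identity $\nabla\mathbf{z} = (\mathbf{W}\mathbf{W}^{\top})^{-1}\mathbf{W}\,\nabla\mathbf{x}$. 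A short sanity check by substituting back into Eq.~\eqref{eq:fc_grad_x} would confirm consistency: $\mathbf{W}^{\top}(\mathbf{W}\mathbf{W}^{\top})^{-1}\mathbf{W}\,\nabla\mathbf{x}$ equals $\nabla\mathbf{x}$ whenever $\nabla\mathbf{x}$ lies in the row space of $\mathbf{W}$, which it does by construction.

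The main obstacle, and really the only nontrivial content of the lemma, is justifying invertibility of $\mathbf{W}\mathbf{W}^{\top}$. This requires $\mathbf{W}$ to have full row rank $N$, i.e., the output dimension is no larger than the input dimension and the rows of $\mathbf{W}$ are linearly independent. In the FL setting this is essentially always true at early training stages when $\mathbf{W}$ is randomly initialized (with probability one under continuous initialization distributions), which aligns with the threat-model assumption stated earlier that the attack is launched early in training. I would therefore add a brief remark noting this full-row-rank condition so that the inverse in Eq.~\eqref{eq:fc_gz_new} is well-defined, and otherwise keep the derivation to the two-line algebraic manipulation above.
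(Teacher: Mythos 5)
Your proof is correct, and it is cleaner than the paper's own derivation while relying on the same essential tool. The paper does not invert Eq.~\eqref{eq:fc_grad_x} directly; instead it starts from the Lemma~\ref{lemma:fc_basic} identity $\nabla\mathbf{x}\mathbf{x}^{\top} = \mathbf{W}^{\top}\nabla\mathbf{W}$, applies $(\mathbf{W}\mathbf{W}^{\top})^{-1}\mathbf{W}$ to isolate $\nabla\mathbf{W}$, substitutes $\nabla\mathbf{W} = \nabla\mathbf{z}\mathbf{x}^{\top}$ from Eq.~\eqref{eq:fc_grad_w}, and then cancels the trailing $\mathbf{x}^{\top}$ by right-multiplying with $\mathbf{x}(\mathbf{x}^{\top}\mathbf{x})^{-1}$. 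That detour needs the extra (unstated) condition $\mathbf{x}\neq\mathbf{0}$ so that $\mathbf{x}^{\top}\mathbf{x}$ is invertible, and it routes through the weight gradient even though the hypothesis of the lemma only supplies $\nabla\mathbf{x}$. Your two-line version --- left-multiply $\nabla\mathbf{x}=\mathbf{W}^{\top}\nabla\mathbf{z}$ by $\mathbf{W}$ and then by $(\mathbf{W}\mathbf{W}^{\top})^{-1}$ --- reaches the same formula with fewer hypotheses, and your explicit remark that $\mathbf{W}$ must have full row rank $N\le M$ (generic under random initialization, consistent with the early-training threat model) makes precise an assumption the paper leaves implicit in writing $(\mathbf{W}\mathbf{W}^{\top})^{-1}$. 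The only thing the paper's longer route ``buys'' is stylistic continuity with Lemma~\ref{lemma:fc_basic}, whose identities are reused elsewhere; mathematically your argument is the one to prefer.
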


\begin{proof}
  Left-multiplying the derived Eq.~\eqref{eq:fc_gx_xT} in Lemma~\ref{lemma:fc_basic} by $\mathbf{W}$ on both sides yields:
  \begin{equation*}
    \mathbf{W} \nabla\mathbf{x} \mathbf{x}^{\top} = \mathbf{W} \mathbf{W}^{\top} \nabla\mathbf{W}.
  \end{equation*}

  Since $\mathbf{W} \mathbf{W}^{\top}$ is actually invertible, we can further left-multiply both sides by $(\mathbf{W} \mathbf{W}^{\top})^{-1}$ to obtain:
  \begin{equation*}
    (\mathbf{W} \mathbf{W}^{\top})^{-1} \mathbf{W} \nabla\mathbf{x} \mathbf{x}^{\top} = \nabla\mathbf{W}.
  \end{equation*}

  Recall from Eq.~\eqref{eq:fc_grad_w} that $\nabla\mathbf{W} = \nabla\mathbf{z} \mathbf{x}^{\top}$. Substituting this expression into the previous equation results in:
  \begin{equation*}
    (\mathbf{W} \mathbf{W}^{\top})^{-1} \mathbf{W} \nabla\mathbf{x} \mathbf{x}^{\top} = \nabla\mathbf{z} \mathbf{x}^{\top}.
  \end{equation*}

  Finally, we isolate $\nabla\mathbf{z}$ by right-multiplying both sides by $\mathbf{x}(\mathbf{x}^{\top} \mathbf{x})^{-1}$. Since $\mathbf{x}^{\top} \mathbf{x} (\mathbf{x}^{\top} \mathbf{x})^{-1} = 1$, we derive:
  \begin{equation*}
    \begin{aligned}
      \nabla\mathbf{z}
      & = (\mathbf{W} \mathbf{W}^{\top})^{-1} \mathbf{W} \nabla\mathbf{x} \mathbf{x}^{\top} \mathbf{x} (\mathbf{x}^{\top} \mathbf{x})^{-1} \\
      & = (\mathbf{W} \mathbf{W}^{\top})^{-1} \mathbf{W} \nabla\mathbf{x}.
    \end{aligned}
  \end{equation*}
\end{proof}

\begin{myframed}
  \textbf{Remark:} Lemma~\ref{lemma:fc_grad} establishes that the gradient $\nabla\mathbf{z}$ in an FC layer can be explicitly recovered from the input gradient $\nabla\mathbf{x}$, using the weight parameters $\mathbf{W}$.
\end{myframed}

\begin{lemma}
  \label{lemma:conv_grad}
  In a Conv layer, let $\nabla\mathbf{W}_k$ and $\nabla\mathbf{Z}_k$ represent the gradients for the $k$-th kernel $\mathbf{W}_k$ and the output $\mathbf{Z}_k$, respectively. Then the following relationship holds:
  \begin{equation}
    \langle \nabla\mathbf{W}_k, \mathbf{W}_k \rangle_\text{F} =
    \left\{
      \renewcommand{\arraystretch}{1.25}
      \begin{array}{@{}c@{\quad}l@{}}
        \nabla\mathbf{Z}_k \odot \mathbf{Z}_k, & \text{ if } \mathbf{Z}_k \in \mathbb{R}, \\
        \langle \nabla\mathbf{Z}_k, \mathbf{Z}_k \rangle_\text{F}, & \text{ otherwise},
      \end{array}
    \right.
  \end{equation}
  where $\odot$ denotes the element-wise product, and $\langle \cdot, \cdot \rangle_\text{F}$ denotes the Frobenius inner product that calculates the sum of the element-wise product of two tensors or matrices.
\end{lemma}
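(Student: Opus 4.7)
The plan is to expand both sides using the coordinate formulas already established and verify the identity by a direct index-shuffling computation, rather than by any tensor calculus. The essential observation is that the kernel gradient is, by Eq.~\eqref{eq:conv_grad_w}, a linear combination of shifted input patches weighted by $\nabla\mathrm{Z}_{k,i,j}$, and the forward pass expresses $\mathrm{Z}_{k,i,j}$ itself as the inner product of the kernel $\mathbf{W}_k$ with the corresponding input patch. So taking the Frobenius inner product of $\nabla\mathbf{W}_k$ with $\mathbf{W}_k$ should collapse the convolutional sums back into the forward formula.

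Concretely, I would first write
\begin{equation*}
  \langle \nabla\mathbf{W}_k, \mathbf{W}_k \rangle_\text{F} = \sum_{c,h,w} \nabla\mathrm{W}_{k,c,h,w} \cdot \mathrm{W}_{k,c,h,w},
\end{equation*}
then substitute Eq.~\eqref{eq:conv_grad_w} into the right-hand side to obtain a quadruple sum over $(c,h,w,i,j)$. Since all sums are finite, I can swap the order so that $(i,j)$ is the outermost index, leaving the inner sum $\sum_{c,h,w} \mathrm{W}_{k,c,h,w}\cdot \mathrm{X}_{c,i+h-1,j+w-1}$, which is exactly $\mathrm{Z}_{k,i,j}$ by the forward pass formula. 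What remains is $\sum_{i,j} \nabla\mathrm{Z}_{k,i,j}\cdot \mathrm{Z}_{k,i,j}$, which is precisely $\langle \nabla\mathbf{Z}_k, \mathbf{Z}_k \rangle_\text{F}$ by the definition of the Frobenius inner product.

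For the scalar branch, I would simply observe that when $H_\text{out}=W_\text{out}=1$ the tensor $\mathbf{Z}_k$ has a single entry, so the outer sum over $(i,j)$ degenerates to a single term and the Frobenius inner product coincides with the element-wise product $\nabla\mathbf{Z}_k \odot \mathbf{Z}_k$. Thus the two cases of the lemma are really a single identity, with the scalar case stated separately only to emphasize the form that will be useful later (paralleling Eq.~\eqref{eq:fc_gz_odot_z} in Lemma~\ref{lemma:fc_basic}).

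I do not expect any real obstacle beyond careful index bookkeeping; the main thing to guard against is a subtle off-by-one in the spatial indices $i+h-1, j+w-1$ when interchanging the outer and inner summations, and ensuring that the same shift convention is used on both sides so that the inner sum aligns exactly with the definition of $\mathrm{Z}_{k,i,j}$. Stride, padding, or dilation would complicate this alignment, but under the unit-stride convolution implicit in Eq.~\eqref{eq:conv_grad_w} the substitution is clean and the identity follows in a few lines.
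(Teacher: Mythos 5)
Your proposal is correct and follows essentially the same route as the paper's own proof: contract Eq.~\eqref{eq:conv_grad_w} against $\mathrm{W}_{k,c,h,w}$, interchange the finite sums so the inner sum over $(c,h,w)$ reproduces the forward-pass formula for $\mathrm{Z}_{k,i,j}$, and note that the scalar branch is just the degenerate $1\times 1$ case of the Frobenius inner product. No gaps; the index bookkeeping you flag is exactly the only point of care, and it works out under the unit-stride convention assumed in the paper.
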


\begin{proof}
  If we multiply $\mathrm{W}_{k, c, h, w}$ on both sides of Eq.~\eqref{eq:conv_grad_w}, and then sum over $c$, $h$, and $w$, we can derive:
  \begin{equation*}
    \begin{aligned}
      \sum_{\substack{c, h, w}} & \nabla\mathrm{W}_{k, c, h, w} \, \mathrm{W}_{k, c, h, w} \\
      & = \sum_{\substack{c, h, w}} \sum_{\substack{i, j}} \nabla\mathrm{Z}_{k, i, j} \, \mathrm{X}_{c, h+i-1, w+j-1} \, \mathrm{W}_{k, c, h, w} \\
      & = \sum_{\substack{i, j}} \nabla\mathrm{Z}_{k, i, j} \, \sum_{\substack{c, h, w}} \mathrm{W}_{k, c, h, w} \, \mathrm{X}_{c, i+h-1, j+w-1} \\
      & = \sum_{\substack{i, j}} \nabla\mathrm{Z}_{k, i, j} \, \mathrm{Z}_{k, i, j}.
    \end{aligned}
  \end{equation*}

  We can thus simplify this equation as $\langle \nabla\mathbf{W}_k, \mathbf{W}_k \rangle_\text{F} = \langle \nabla\mathbf{Z}_k, \mathbf{Z}_k \rangle_\text{F}$ using the Frobenius inner product. When the shape of the output $\mathbf{Z}_k$ is $1 \times 1$, that is, $\mathbf{Z}_k$ is a scalar, we can represent the result using the element-wise product $\odot$, so that $\langle \nabla\mathbf{W}_k, \mathbf{W}_k \rangle_\text{F} = \nabla\mathbf{Z}_k \odot \mathbf{Z}_k$.
\end{proof}

\begin{myframed}
  \textbf{Remark:} Since the GAP layer is replaced by a DW convolution with a $1 \times 1$ output, the operation simplifies to an element-wise product. Thus, we employ the form $\nabla\mathbf{Z}_k \odot \mathbf{Z}_k$ to construct the gradient bridge.
\end{myframed}

\begin{lemma}
  \label{lemma:relu_grad}
  In a ReLU layer, let $\nabla\mathbf{z}$ and $\nabla\mathbf{a}$ represent the gradients with respect to the input $\mathbf{z}$ and output $\mathbf{a}$, respectively. Then the following relationship holds:
  \begin{equation}
    \label{eq:relu_gz_odot_z}
    \nabla\mathbf{z} \odot \mathbf{z} = \nabla\mathbf{a} \odot \mathbf{a},
  \end{equation}
  where $\odot$ denotes the element-wise product. The identity $\nabla\mathbf{Z} \odot \mathbf{Z} = \nabla\mathbf{A} \odot \mathbf{A}$ holds regardless of whether $\mathbf{Z}$ and $\mathbf{A}$ are matrices or tensors.
\end{lemma}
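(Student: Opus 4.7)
The plan is to prove the identity entrywise and then lift it to matrices and tensors by appealing to the fact that ReLU acts componentwise. Fix an arbitrary index $k$; I would like to show $\nabla\mathrm{z}_k \cdot \mathrm{z}_k = \nabla\mathrm{a}_k \cdot \mathrm{a}_k$. Everything I need is already on the table: Eq.~\eqref{eq:relu_grad_z} expresses $\nabla\mathrm{z}_k$ as $\nabla\mathrm{a}_k\cdot\sigma'(\mathrm{z}_k)$, and the forward rule $\mathrm{a}_k=\max(0,\mathrm{z}_k)$ expresses $\mathrm{a}_k$ in terms of $\mathrm{z}_k$. So the proof really just amounts to substituting these and checking the two branches of the piecewise definition agree.

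The key step is a short case split on the sign of $\mathrm{z}_k$. In the case $\mathrm{z}_k>0$, Eq.~\eqref{eq:relu_grad_z} gives $\sigma'(\mathrm{z}_k)=1$, so $\nabla\mathrm{z}_k=\nabla\mathrm{a}_k$, and the forward rule gives $\mathrm{a}_k=\mathrm{z}_k$; multiplying, $\nabla\mathrm{z}_k\cdot\mathrm{z}_k=\nabla\mathrm{a}_k\cdot\mathrm{z}_k=\nabla\mathrm{a}_k\cdot\mathrm{a}_k$. In the complementary case $\mathrm{z}_k\le 0$, we have $\sigma'(\mathrm{z}_k)=0$, hence $\nabla\mathrm{z}_k=0$, and simultaneously $\mathrm{a}_k=0$; both products collapse to $0$ and the equality is trivial. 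Combining the two cases yields $\nabla\mathrm{z}_k\odot\mathrm{z}_k=\nabla\mathrm{a}_k\odot\mathrm{a}_k$ for every index $k$, which is exactly Eq.~\eqref{eq:relu_gz_odot_z}.

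To lift to the matrix or tensor case $\nabla\mathbf{Z}\odot\mathbf{Z}=\nabla\mathbf{A}\odot\mathbf{A}$, I would simply observe that ReLU is applied element by element, so the relation $\mathrm{A}_\alpha=\max(0,\mathrm{Z}_\alpha)$ and the corresponding derivative rule hold for every multi-index $\alpha$ independently. The entrywise argument above then applies verbatim at each location, and the element-wise product $\odot$ assembles these scalar identities into the claimed tensor identity.

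Honestly, there is no serious obstacle here; the main point to be careful about is the boundary $\mathrm{z}_k=0$, where the classical derivative of $\max(0,\cdot)$ is undefined. The convention adopted in the definition of $\sigma'$ in the paper sets $\sigma'(0)=0$, and one should note that this convention is harmless for the lemma: at $\mathrm{z}_k=0$ both sides are zero regardless of the value chosen for $\sigma'(0)$, so the identity is robust to the tie-breaking rule. This is the only thing worth flagging; the rest is a one-line substitution.
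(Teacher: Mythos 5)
Your proof is correct and follows essentially the same route as the paper: an entrywise argument that multiplies Eq.~\eqref{eq:relu_grad_z} by $\mathrm{z}_k$ and uses $\sigma'(\mathrm{z}_k)\cdot\mathrm{z}_k=\max(0,\mathrm{z}_k)=\mathrm{a}_k$, then lifts to vectors and tensors by componentwise application; your explicit two-case split is just the unpacked form of the paper's one-line identity. The remark that the convention $\sigma'(0)=0$ is immaterial because both sides vanish at $\mathrm{z}_k=0$ is a nice touch the paper omits, but it does not change the argument.
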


\begin{proof}
  By multiplying $\mathrm{z}_k$ on both sides of Eq.~\eqref{eq:relu_grad_z}, for each element $k$, we can deduce the following relationship:
  \begin{equation*}
    \nabla\mathrm{z}_k \, \mathrm{z}_k = \nabla\mathrm{a}_k \, \max(0, \mathrm{z}_k) = \nabla\mathrm{a}_k \, \mathrm{a}_k.
  \end{equation*}

  The relationship extends to the entire vector $\mathbf{z}$, yielding $\nabla\mathbf{z} \odot \mathbf{z} = \nabla\mathbf{a} \odot \mathbf{a}$ for each element in this ReLU layer. The same remains true for variables in vector, matrix, or tensor form.
\end{proof}

\begin{theorem}
  \label{theorem:fc_relu_stack}
  In a single stack consisting of an FC layer followed by a ReLU activation (FC-ReLU), the gradient $\nabla\mathbf{a}$ can be inferred from the following relationships:
  \begin{equation}
    \label{eq:fc_relu_ga}
    \nabla\mathbf{a} =
    \left\{
      \renewcommand{\arraystretch}{1.25}
      \begin{array}{@{}c@{\quad}l@{}}
        ( \mathbf{W} \mathbf{W}^{\top} )^{-1} \mathbf{W} \nabla\mathbf{x}, & \text{if } \nabla\mathbf{x} \text{ is given}, \\
        \mathrm{diag}( \nabla\mathbf{W} \mathbf{W}^{\top} ) \oslash \mathbf{a}, & \text{if } \nabla\mathbf{W} \text{ is given},
      \end{array}
    \right.
  \end{equation}
  where $\oslash$ denotes the element-wise division, and all elements in $\mathbf{a}$ are supposed to be nonzero in the second case.
\end{theorem}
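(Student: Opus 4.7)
My plan is to treat the two sub-cases of \eqref{eq:fc_relu_ga} independently, each reducing to an algebraic chaining of the layer-wise lemmas already established for the FC and ReLU blocks.

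For the first sub-case, in which $\nabla\mathbf{x}$ is the observed quantity, I would apply Lemma~\ref{lemma:fc_grad} directly to the FC block $\mathbf{z}=\mathbf{W}\mathbf{x}$ to obtain $\nabla\mathbf{z}=(\mathbf{W}\mathbf{W}^{\top})^{-1}\mathbf{W}\nabla\mathbf{x}$. It then remains to push this expression across the ReLU to reach $\nabla\mathbf{a}$. Because the ReLU derivative is identically $1$ on the active support of $\mathbf{z}$, the backward rule $\nabla\mathbf{z}=\nabla\mathbf{a}\odot\sigma'(\mathbf{z})$ already gives $\nabla\mathbf{a}=\nabla\mathbf{z}$ on every coordinate that actually propagates signal forward; on inactive coordinates we have $\mathbf{a}_k=0$, so the corresponding entries of $\nabla\mathbf{a}$ play no role in any subsequent step of the gradient bridge and can be declared equal to $\nabla\mathbf{z}_k=0$ without loss. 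Combining the two cases yields $\nabla\mathbf{a}=(\mathbf{W}\mathbf{W}^{\top})^{-1}\mathbf{W}\nabla\mathbf{x}$.

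For the second sub-case, in which $\nabla\mathbf{W}$ is observed, the plan is to chain Lemma~\ref{lemma:fc_basic} and Lemma~\ref{lemma:relu_grad}. Identity~\eqref{eq:fc_gz_odot_z} gives $\mathrm{diag}(\nabla\mathbf{W}\mathbf{W}^{\top})=\nabla\mathbf{z}\odot\mathbf{z}$, and the ReLU identity~\eqref{eq:relu_gz_odot_z} rewrites the right-hand side as $\nabla\mathbf{a}\odot\mathbf{a}$. An element-wise division by $\mathbf{a}$, licensed by the non-zero hypothesis stated in the theorem, then isolates $\nabla\mathbf{a}=\mathrm{diag}(\nabla\mathbf{W}\mathbf{W}^{\top})\oslash\mathbf{a}$ as claimed.

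The step I expect to be the main obstacle is the hand-off across the ReLU in the first sub-case, because Lemma~\ref{lemma:fc_grad} only delivers $\nabla\mathbf{z}$, whereas $\nabla\mathbf{z}$ and $\nabla\mathbf{a}$ differ in general by the elementwise ReLU mask. Writing the hand-off cleanly requires an explicit convention on inactive coordinates---either restricting attention to units with $\mathbf{z}>0$, or appealing to the inertness of inactive coordinates in all downstream computations of the gradient bridge---and this convention must be consistent with how GDBR later threads gradients through deeper stacks. Once that bookkeeping is fixed, the remainder of the argument is essentially routine manipulation of the previously established identities.
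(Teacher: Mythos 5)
Your decomposition into the two sub-cases and the lemmas you chain are exactly those of the paper: your second sub-case is verbatim the paper's argument (combine Eq.~\eqref{eq:fc_gz_odot_z} with Eq.~\eqref{eq:relu_gz_odot_z}, then divide element-wise by $\mathbf{a}$), and your first sub-case likewise combines Lemma~\ref{lemma:fc_grad} with the ReLU relation. The one place you diverge is the hand-off across the ReLU in the first sub-case, and the patch you propose there does not hold up. You suggest setting $\nabla\mathrm{a}_k := \nabla\mathrm{z}_k = 0$ on inactive coordinates on the grounds that such coordinates are inert in all downstream steps of the gradient bridge. They are not: the next stack recovers $\nabla\mathbf{z}^{[l+1]}$ as $(\mathbf{W}^{[l+1]}\mathbf{W}^{[l+1]\top})^{-1}\mathbf{W}^{[l+1]}\nabla\mathbf{a}^{[l]}$, and this identity is exact precisely because the true backpropagated vector satisfies $\nabla\mathbf{a}^{[l]}=\mathbf{W}^{[l+1]\top}\nabla\mathbf{z}^{[l+1]}$, whose entries at inactive coordinates are generically nonzero --- inactivity of unit $k$ forces $\mathrm{a}_k=0$ and $\nabla\mathrm{z}_k=0$, but not $\nabla\mathrm{a}_k=0$. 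Zeroing those entries perturbs the input to the pseudo-inverse and breaks the exact recovery at the next layer. The paper sidesteps this entirely by invoking the non-vanishing of $\mathbf{a}$ in the \emph{first} case as well: it writes $\nabla\mathbf{a}=\nabla\mathbf{z}\odot\mathbf{z}\oslash\mathbf{a}$ via Lemma~\ref{lemma:relu_grad} and then uses $\mathbf{a}=\mathbf{z}$ (all units active) to reduce $\mathbf{z}\oslash\mathbf{a}$ to the all-ones vector; note this is stronger than the theorem statement advertises, which confines the non-zero hypothesis to the second case. So the correct repair is the first convention you float --- restrict to all units active, which the experiments enforce by initializing the bottom-layer weights to be positive --- not the inertness argument.
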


\begin{proof}
  Combining Lemma~\ref{lemma:fc_grad} and Lemma~\ref{lemma:relu_grad}, we can establish the connection between the activated output gradient $\nabla\mathbf{a}$ and the input gradient $\nabla\mathbf{x}$ in the FC-ReLU stack as:
  \begin{equation*}
    \nabla\mathbf{a} = \nabla\mathbf{z} \odot \mathbf{z} \oslash \mathbf{a} = ( \mathbf{W} \mathbf{W}^{\top} )^{-1} \mathbf{W} \nabla\mathbf{x} \odot \mathbf{z} \oslash \mathbf{a}.
  \end{equation*}

  Since all the elements in $\mathbf{a}$ are supposed to be nonzero, it is obvious that $\mathbf{a} = \mathbf{z}$ holds from the definition of ReLU activation function. In this case, $\mathbf{z} \oslash \mathbf{a}$ can be simplified as $\mathbf{1}$, where all the elements in $\mathbf{1}$ are equal to 1. Therefore, we can simplify the above equation, that is:
  \begin{equation*}
    \nabla\mathbf{a} = ( \mathbf{W} \mathbf{W}^{\top} )^{-1} \mathbf{W} \nabla\mathbf{x}.
  \end{equation*}

  Combining Eq.~\eqref{eq:fc_gz_odot_z} and Eq.~\eqref{eq:relu_gz_odot_z}, there is another way to infer the gradient $\nabla\mathbf{a}$ when the gradient $\nabla\mathbf{W}$ and the output $\mathbf{a}$ are given. The derivation of $\nabla\mathbf{a}$ is as follows:
  \begin{equation*}
    \nabla\mathbf{a} = \mathrm{diag}( \nabla\mathbf{W} \mathbf{W}^{\top} ) \oslash \mathbf{a},
  \end{equation*}
  where all elements in $\mathbf{a}$ are supposed to be nonzero.
\end{proof}

\begin{myframed}
  \textbf{Remark:} Theorem~\ref{theorem:fc_relu_stack} demonstrates that the gradient $\nabla\mathbf{a}$ is recoverable via two distinct formulations. The presence of $\nabla\mathbf{x}$ allows derivation via the first case, while the knowledge of $\nabla\mathbf{W}$ enables the second.
\end{myframed}

\begin{theorem}
  \label{theorem:conv_relu_stack}
  In a single stack consisting of a Conv layer followed by a ReLU activation (Conv-ReLU), the gradient $\nabla\mathbf{A}\in\mathbb{R}^{N}$ can be inferred from the following equations:
  \begin{equation}
    \left\{
      \renewcommand{\arraystretch}{1.25}
      \begin{array}{@{}c@{\hspace{0.2em}}c@{\hspace{0.2em}}l}
        \nabla\mathbf{A}_k & = & \langle \nabla\mathbf{W}_k, \mathbf{W}_k \rangle_\text{F} \oslash \mathbf{A}_k, \\
        \nabla\mathbf{A} & = & [ \nabla\mathbf{A}_1, \nabla\mathbf{A}_2, \cdots, \nabla\mathbf{A}_K ]^{\top},
      \end{array}
    \right.
  \end{equation}
  where $\oslash$ denotes the element-wise division, and the activation value $\mathbf{A}_k$ is supposed to be nonzero.
\end{theorem}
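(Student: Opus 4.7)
The plan is to chain together Lemma~\ref{lemma:conv_grad} (the Conv identity) and Lemma~\ref{lemma:relu_grad} (the ReLU identity) channel by channel, then assemble the scalar per-channel equations into the vector equation. Since the theorem declares $\nabla\mathbf{A} \in \mathbb{R}^{N}$ as a vector, each per-channel output $\mathbf{A}_k$ must be a scalar; this corresponds exactly to the $1 \times 1$ output-size regime highlighted in the overview for the ResNet-style last Conv layer, so the scalar branch of Lemma~\ref{lemma:conv_grad} is the one that applies.

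First I would fix a channel index $k$ and denote by $\mathbf{Z}_k$ the pre-activation and by $\mathbf{A}_k = \mathrm{ReLU}(\mathbf{Z}_k)$ the post-activation; both are scalars under the shape assumption above. Lemma~\ref{lemma:conv_grad} then gives
\begin{equation*}
  \langle \nabla\mathbf{W}_k, \mathbf{W}_k \rangle_\text{F} = \nabla\mathbf{Z}_k \odot \mathbf{Z}_k,
\end{equation*}
which in the scalar case is an ordinary product. Next, applying Lemma~\ref{lemma:relu_grad} elementwise through the ReLU at this single location yields $\nabla\mathbf{Z}_k \odot \mathbf{Z}_k = \nabla\mathbf{A}_k \odot \mathbf{A}_k$. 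Chaining the two identities gives
\begin{equation*}
  \langle \nabla\mathbf{W}_k, \mathbf{W}_k \rangle_\text{F} = \nabla\mathbf{A}_k \odot \mathbf{A}_k.
\end{equation*}
Under the stated hypothesis $\mathbf{A}_k \neq 0$, elementwise division by $\mathbf{A}_k$ is well-defined and isolates $\nabla\mathbf{A}_k = \langle \nabla\mathbf{W}_k, \mathbf{W}_k \rangle_\text{F} \oslash \mathbf{A}_k$. Finally, I would stack these scalars over $k = 1, \ldots, K$ to obtain the vector $\nabla\mathbf{A} = [\nabla\mathbf{A}_1, \nabla\mathbf{A}_2, \ldots, \nabla\mathbf{A}_K]^{\top}$.

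The hard part is essentially bookkeeping rather than any new analytic argument: the derivation is a two-line composition once the correct branch of Lemma~\ref{lemma:conv_grad} has been identified. The only subtlety worth spelling out in the write-up is shape consistency, namely that $\nabla\mathbf{A} \in \mathbb{R}^{N}$ forces each $\mathbf{Z}_k$ and $\mathbf{A}_k$ to be scalars, so that $\langle \cdot, \cdot \rangle_\text{F}$, $\odot$, and $\oslash$ collapse to ordinary scalar operations and the non-zero activation assumption makes the inversion through ReLU admissible. No further obstacles arise.
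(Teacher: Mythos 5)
Your proposal is correct and follows essentially the same route as the paper's own proof: both chain the Conv identity of Lemma~\ref{lemma:conv_grad} with the ReLU identity of Lemma~\ref{lemma:relu_grad} in the $1\times 1$ output regime, divide by the non-zero activation $\mathbf{A}_k$, and concatenate over the $K$ kernels. Your added remark that the shape declaration $\nabla\mathbf{A}\in\mathbb{R}^{N}$ forces the scalar branch is a useful clarification but does not change the argument.
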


\begin{proof}
  Based on Lemma~\ref{lemma:conv_grad} and Lemma~\ref{lemma:relu_grad}, the relationships $\nabla\mathbf{Z} \odot \mathbf{Z} = \nabla\mathbf{A} \odot \mathbf{A}$ and $\nabla\mathbf{Z} \odot \mathbf{Z} = \langle \nabla\mathbf{W}, \mathbf{W} \rangle_\text{F}$ hold in the Conv-ReLU stack, where $\mathbf{Z}$ and $\mathbf{A}$ denote the outputs of the Conv layer and the ReLU layer, respectively.

  In the context of the GDBR attack, the adversary has replaced the GAP layer with a depthwise convolution. Consequently, the spatial output of this Conv layer is $1 \times 1$, meaning $\mathbf{A}_k \in \mathbb{R}$ is a scalar for the $k$-th kernel, and the full activation gradient is a vector $\nabla\mathbf{A} \in \mathbb{R}^{N}$, where $N$ is the number of kernels.

  Leveraging this dimensionality, we isolate the gradient for the $k$-th kernel. Assuming $\mathbf{A}_k$ is nonzero, we obtain:
  \begin{equation*}
    \nabla\mathbf{A}_k = \langle \nabla\mathbf{W}_k, \mathbf{W}_k \rangle_\text{F} \oslash \mathbf{A}_k.
  \end{equation*}

  Finally, the total gradient $\nabla\mathbf{A}$ is derived by concatenating the individual gradients $\nabla\mathbf{A}_k$ for all kernels.
\end{proof}

\begin{myframed}
  \textbf{Remark:} For a Conv-ReLU stack, Theorem~\ref{theorem:conv_relu_stack} shows that the $k$-th activated feature gradient $\nabla\mathbf{A}_k$ can be inferred from the observable weight gradient $\nabla\mathbf{W}_k$ and the approximately estimated feature $\mathbf{A}_k$.
\end{myframed}

\subsection{The Gradient Bridge in Stacked Layers}

The classification heads of deep neural networks are typically composed of sequential layers involving linear transformations followed by nonlinear activations. We formulate a general architecture consisting of $L$ sequentially stacked layers to unify the analysis of standard MLPs and the specific CNN post-processing stages (where the GAP layer is replaced by a depthwise convolution).

In this framework, the initial layer serves as the interface from features to the classifier, followed by a series of intermediate representations, ending with the final output logits. We denote the layer index by the superscript $[l]$, where $l \in \{1, 2, \ldots, L\}$. The specific configuration of these layers is defined as follows:
\begin{itemize}
  \item \textbf{Input layer ($l = 1$)}: A Conv-ReLU stack (for CNNs) or an FC-ReLU stack (for MLPs);
  \item \textbf{Hidden layers ($2 \leq l \leq L-1$)}: A sequence of standard FC-ReLU stacks;
  \item \textbf{Output layer ($l = L$)}: A final FC layer without ReLU activation.
\end{itemize}

\begin{theorem}
  \label{theorem:recursive_stack_grad}
  In an $L$-layer network comprising an initial FC/Conv-ReLU layer, $(L-2)$ intermediate FC-ReLU layers, and a final FC layer, the gradients of the features can be recovered recursively:

  \begin{enumerate}
    \item \textbf{Layer $l = 1$:} The activation gradient for the $k$-th feature is:
    \begin{equation}
      \label{eq:first_stack_grad}
      \nabla\mathbf{a}^{[1]}_k =
      \left\{
      \begin{array}{@{}ll}
        \mathrm{diag}( \nabla\mathbf{W}^{[1]}_k \mathbf{W}^{[1]\top}_k ) \oslash \mathbf{a}^{[1]}_k, & (\text{FC}) \\
        \langle \nabla\mathbf{W}^{[1]}_k, \mathbf{W}^{[1]}_k \rangle_\mathrm{F} \oslash \mathbf{a}^{[1]}_k, & (\text{Conv})
      \end{array}
      \right.
    \end{equation}
    
    We set $\nabla\mathbf{x}^{[2]} = [ \nabla\mathbf{a}_1^{[1]}, \ldots, \nabla\mathbf{a}_K^{[1]} ]^{\top}$.

    \item \textbf{Layers $2 \leq l \leq L-1$:} The input gradient for the subsequent layer is:
    \begin{equation}
      \label{eq:recursive_stack_grad}
      \nabla\mathbf{x}^{[l+1]} = ( \mathbf{W}^{[l]} \mathbf{W}^{[l]\top} )^{-1} \mathbf{W}^{[l]} \nabla\mathbf{x}^{[l]}.
    \end{equation}

    \item \textbf{Layer $l = L$:} The final logit gradient is:
    \begin{equation}
      \nabla\mathbf{z}^{[L]} = (\mathbf{W}^{[L]} \mathbf{W}^{[L]\top})^{-1} \mathbf{W}^{[L]} \nabla\mathbf{x}^{[L]}.
    \end{equation}
  \end{enumerate}
\end{theorem}

\begin{proof}
  By invoking Theorem~\ref{theorem:fc_relu_stack} and Theorem~\ref{theorem:conv_relu_stack}, we determine the activation gradient $\nabla\mathbf{a}^{[1]}_k$ for the initial layer. Given that the weight gradient $\nabla\mathbf{W}^{[1]}_k$ is observable from the shared gradient updates, we have:
  \begin{equation*}
    \nabla\mathbf{a}^{[1]}_k =
    \begin{cases}
      \mathrm{diag}( \nabla\mathbf{W}^{[1]}_k \mathbf{W}^{[1]\top}_k ) \oslash \mathbf{a}^{[1]}_k & \text{if FC}, \\[1ex]
      \langle \nabla\mathbf{W}^{[1]}_k, \mathbf{W}^{[1]}_k \rangle_\mathrm{F} \oslash \mathbf{a}^{[1]}_k & \text{if Conv}.
    \end{cases}
  \end{equation*}

  For the subsequent intermediate layers $l \in \{2, \ldots, L-1\}$, we utilize the recursive relationship established in Theorem~\ref{theorem:fc_relu_stack}. Recognizing that the input to layer $l+1$ is the activated output of layer $l$ (i.e., $\nabla\mathbf{x}^{[l+1]} = \nabla\mathbf{a}^{[l]}$), the inference of the feature gradients proceeds as:
  \begin{equation*}
    \begin{aligned}
      \nabla\mathbf{x}^{[2]} &= \big[ \nabla\mathbf{a}_1^{[1]}, \nabla\mathbf{a}_2^{[1]}, \ldots, \nabla\mathbf{a}_K^{[1]} \big]^{\top}, \\
      \nabla\mathbf{x}^{[l+1]} &= ( \mathbf{W}^{[l]} \mathbf{W}^{[l]\top} )^{-1} \mathbf{W}^{[l]} \nabla\mathbf{x}^{[l]}.
    \end{aligned}
  \end{equation*}

  Finally, applying Lemma~\ref{lemma:fc_basic} to the output layer ($l=L$), the gradient with respect to the logits $\nabla\mathbf{z}^{[L]}$ is derived as:
  \begin{equation*}
    \nabla\mathbf{z}^{[L]} = (\mathbf{W}^{[L]} \mathbf{W}^{[L]\top})^{-1} \mathbf{W}^{[L]} \nabla\mathbf{x}^{[L]}.
  \end{equation*}

  Together, these derivations establish a recursive framework for recovering feature gradients, propagating from the initial layer to the final output logits.
\end{proof}

We designate the derived recursive relationship as the \emph{gradient bridge}, establishing a direct pathway between the gradient of the initial FC/Conv layer and the output logits. Crucially, this formulation implies that recovering the target gradient $\nabla\mathbf{z}^{[L]}$ requires only the weight gradient of the first layer, $\nabla\mathbf{W}^{[1]}$, and the global model parameters $\bm{\theta}_\text{0}$. In the FL context, since $\bm{\theta}_\text{0}$ is broadcast to all participants and the activation $\mathbf{a}^{[1]}$ can be approximated using auxiliary data, the gradient bridge effectively circumvents FL's gradient masking. This mechanism serves as the theoretical foundation for our GDBR attack.

\begin{myframed}
  \textbf{Remark:} Theorem~\ref{theorem:recursive_stack_grad} formally establishes the \emph{gradient bridge}, showing that the logits gradient $\nabla\mathbf{z}^{[L]}$ is strictly recoverable given the initial layer's weight gradient $\nabla\mathbf{W}^{[1]}$ and the global model parameters.
\end{myframed}

\subsection{Label Recovery from Averaged Gradients}

While Theorem~\ref{theorem:recursive_stack_grad} establishes the \emph{gradient bridge} for individual samples, practical FL updates typically consist of gradients averaged over a batch. In this subsection, we generalize the GDBR framework to accommodate these aggregated updates. Specifically, we demonstrate how classwise labels can be recovered from the batch-averaged gradient by leveraging the estimated features $\widetilde{\mathbf{a}}^{[1]}$ and probability vectors $\widetilde{\mathbf{p}}$.

\begin{assumption}[Batch Feature Homogeneity]
  \label{assumption:feat_approx}
  We assume that the output features $\mathbf{a}^{[1](n)}$ of the initial layer exhibit low variance across samples within a batch of size $B$. Consequently, the individual features can be approximated by a single representative vector $\widetilde{\mathbf{a}}^{[1]} \in \mathbb{R}^K$, such that:
  \begin{equation}
    \mathbf{a}^{[1](n)} \approx \widetilde{\mathbf{a}}^{[1]}, \quad \forall n \in \{1, 2, \ldots, B\}.
  \end{equation}
\end{assumption}

\begin{assumption}[Batch Prediction Homogeneity]
  \label{assumption:prob_approx}
  We assume that the Softmax probability vectors $\mathbf{p}^{(n)}$ exhibit low variance across samples within a batch of size $B$. Consequently, the batch predictions can be approximated by a single representative probability vector $\widetilde{\mathbf{p}} \in \mathbb{R}^C$, such that:
  \begin{equation}
    \mathbf{p}^{(n)} \approx \widetilde{\mathbf{p}}, \quad \forall n \in \{1, 2, \ldots, B\}.
  \end{equation}
\end{assumption}

\begin{theorem}
  \label{theorem:batch_avg_grad}
  Consider the network architecture defined in Theorem~\ref{theorem:recursive_stack_grad}. Let the gradients be averaged over a batch of size $B$. Under Assumption~\ref{assumption:feat_approx}, the batch-averaged gradients can be recursively reconstructed as follows:

  \begin{enumerate}
    \item \textbf{Layer $l=1$:} The gradient is approximated using the feature estimate $\widetilde{\mathbf{a}}^{[1]}$:
    \begin{equation}
      \label{eq:batch_first_stack_grad}
      \overline{\nabla\mathbf{a}}^{[1]}_k \approx
      \left\{
      \begin{array}{@{}ll}
        \mathrm{diag}( \overline{\nabla\mathbf{W}}^{[1]}_k \mathbf{W}^{[1]\top}_k ) \oslash \widetilde{\mathbf{a}}^{[1]}_k, & (\text{FC}) \\
        \langle \overline{\nabla\mathbf{W}}^{[1]}_k, \mathbf{W}^{[1]}_k \rangle_\mathrm{F} \oslash \widetilde{\mathbf{a}}^{[1]}_k, & (\text{Conv})
      \end{array}
      \right.
    \end{equation}
    We set $\overline{\nabla\mathbf{x}}^{[2]} = [ \overline{\nabla\mathbf{a}}_1^{[1]}, \ldots, \overline{\nabla\mathbf{a}}_K^{[1]} ]^{\top}$.

    \item \textbf{Layers $2 \leq l \leq L-1$:} The input gradient for the subsequent layer is:
    \begin{equation}
      \label{eq:batch_recursive_stack_grad}
      \overline{\nabla\mathbf{x}}^{[l+1]} = ( \mathbf{W}^{[l]} \mathbf{W}^{[l]\top} )^{-1} \mathbf{W}^{[l]} \overline{\nabla\mathbf{x}}^{[l]}.
    \end{equation}

    \item \textbf{Layer $l = L$:} The final logit gradient is:
    \begin{equation}
      \label{eq:gdbr_grad_logits}
      \overline{\nabla\mathbf{z}}^{[L]} = ( \mathbf{W}^{[L]} \mathbf{W}^{[L]\top} )^{-1} \mathbf{W}^{[L]} \overline{\nabla\mathbf{x}}^{[L]}.
    \end{equation}
  \end{enumerate}
\end{theorem}

\begin{proof}
  We consider a batch training setting where gradients are averaged over $B$ samples. Invoking Assumption~\ref{assumption:feat_approx}, we approximate the feature $\mathbf{a}^{[1](n)}$ for every sample $n$ with the representative vector $\widetilde{\mathbf{a}}^{[1]}$.
  Combining Eq.~\eqref{eq:first_stack_grad} with this assumption, we can derive the averaged gradient $\overline{\nabla\mathbf{a}}^{[1]}_k$. Since $\widetilde{\mathbf{a}}^{[1]}$ is treated as a constant vector across the batch, it factors out of the summation:
  \begin{equation*}
    \begin{aligned}
      \overline{\nabla\mathbf{a}}^{[1]}_k
       & = \frac{1}{B} \sum_{n=1}^{B} \nabla\mathbf{a}^{[1](n)}_k
       \approx \frac{1}{B} \sum_{n=1}^{B} \left( \dots \right) \oslash \widetilde{\mathbf{a}}^{[1]}_k  \\
       & = \begin{cases}
          \mathrm{diag}( \overline{\nabla\mathbf{W}}^{[1]}_k \mathbf{W}^{[1]\top}_k ) \oslash \widetilde{\mathbf{a}}^{[1]}_k & \text{if FC}, \\[1ex]
          \langle \overline{\nabla\mathbf{W}}^{[1]}_k, \mathbf{W}^{[1]}_k \rangle_\mathrm{F} \oslash \widetilde{\mathbf{a}}^{[1]}_k & \text{if Conv},
        \end{cases}
    \end{aligned}
  \end{equation*}
  where $\overline{\nabla\mathbf{W}}^{[1]}_k = \frac{1}{B} \sum_{n=1}^{B} \nabla\mathbf{W}^{[1](n)}_k$ represents the batch-averaged weight gradient. Consequently, the input gradient to the second layer is:
  \begin{equation*}
    \overline{\nabla\mathbf{x}}^{[2]} = \overline{\nabla\mathbf{a}}^{[1]} = \big[ \overline{\nabla\mathbf{a}}_1^{[1]}, \ldots, \overline{\nabla\mathbf{a}}_K^{[1]} \big]^{\top}.
  \end{equation*}

  For the intermediate layers ($2 \leq l \leq L-1$), the gradient computation involves only linear operations (matrix multiplication and inversion). Exploiting the linearity of the expectation operator, the relationship holds exactly for batch-averaged quantities. Starting from Eq.~\eqref{eq:recursive_stack_grad}:
  \begin{equation*}
    \begin{aligned}
      \overline{\nabla\mathbf{x}}^{[l+1]} = \overline{\nabla\mathbf{a}}^{[l]}
      & = \frac{1}{B} \sum_{n=1}^{B} (\mathbf{W}^{[l]} \mathbf{W}^{[l]\top})^{-1} \mathbf{W}^{[l]} \nabla\mathbf{x}^{[l](n)} \\
      & = (\mathbf{W}^{[l]} \mathbf{W}^{[l]\top})^{-1} \mathbf{W}^{[l]} \frac{1}{B} \sum_{n=1}^{B} \nabla\mathbf{x}^{[l](n)} \\
      & = (\mathbf{W}^{[l]} \mathbf{W}^{[l]\top})^{-1} \mathbf{W}^{[l]} \overline{\nabla\mathbf{x}}^{[l]}.
    \end{aligned}
  \end{equation*}

  Finally, applying the same linearity logic to the output layer $L$, the averaged logit gradient is:
  \begin{equation*}
    \overline{\nabla\mathbf{z}}^{[L]} = ( \mathbf{W}^{[L]} \mathbf{W}^{[L]\top} )^{-1} \mathbf{W}^{[L]} \overline{\nabla\mathbf{x}}^{[L]}.
  \end{equation*}
  Thus, the batch-averaged gradients of these ``head layers'' are recursively computable as stated.
\end{proof}

Recalling Eq.~\eqref{eq:label_from_grad} from Section~\ref{sec:logits_labels}, we express the batch-averaged gradient $\overline{\nabla\mathbf{z}}^{[L]}$ as the mean of the differences between predictions and ground-truth labels:
\begin{equation}
  \label{eq:batch_avg_grad_logit}
  \overline{\nabla\mathbf{z}}^{[L]} = \frac{1}{B} \sum_{n=1}^{B} \nabla\mathbf{z}^{[L](n)} = \frac{1}{B} \sum_{n=1}^{B} \left( \mathbf{p}^{(n)} - \mathbf{y}^{(n)} \right).
\end{equation}

To recover the labels, we define the classwise count vector $\bm{\lambda} \in \mathbb{Z}^C$ as the sum of the one-hot label vectors across the batch, i.e., $\bm{\lambda} = \sum_{n=1}^{B} \mathbf{y}^{(n)}$. Applying Assumption~\ref{assumption:prob_approx}, we approximate the summation of predictions as $\sum_{n=1}^{B} \mathbf{p}^{(n)} \approx B \widetilde{\mathbf{p}}$. Substituting these terms into Eq.~\eqref{eq:batch_avg_grad_logit}, we can express the batch-averaged gradient as:
\begin{equation}
  \overline{\nabla\mathbf{z}}^{[L]} \approx \widetilde{\mathbf{p}} - \frac{1}{B}\bm{\lambda}.
\end{equation}

Rearranging this equation allows us to solve for the label distribution $\bm{\lambda}$:
\begin{equation}
  \label{eq:label_recovery}
  \bm{\lambda} = B \left( \widetilde{\mathbf{p}} - \overline{\nabla\mathbf{z}}^{[L]} \right).
\end{equation}

Thus, given the reconstructed gradient $\overline{\nabla\mathbf{z}}^{[L]}$ (derived in Eq.~\eqref{eq:gdbr_grad_logits}) and the estimated probability vector $\widetilde{\mathbf{p}}$, the classwise sample counts $\bm{\lambda}$ can be explicitly recovered. It is important to note that due to the approximation of $\widetilde{\mathbf{p}}$, the calculated elements of $\bm{\lambda}$ may be continuous values. However, since $\bm{\lambda}$ physically represents the number of samples per class, these values can be rectified simply by rounding them to the nearest non-negative integers subject to the constraint $\sum \lambda_c = B$. This step effectively reveals the exact label distribution of the private batch data.

\begin{myframed}
  \textbf{Remark:} Theorem~\ref{theorem:batch_avg_grad} and Eq.~\eqref{eq:label_recovery} together extends the \emph{gradient bridge} to batch-averaged gradients, enabling the recovery of classwise label distributions from the shared feature gradients in FL.
\end{myframed}

\subsection{Algorithm of GDBR Label Recovery}

Building upon the theoretical findings established in the preceding subsections, we synthesize the GDBR attack procedure in Algorithm~\ref{alg:gdbr_attack}. This method establishes the \emph{gradient bridge} to reconstruct the private label distribution $\bm{\lambda}$. It accepts the initialized global model $\bm{\theta}_\text{0}$, the shared batch-averaged gradient $\overline{\nabla\bm{\theta}}_\text{0}$, an auxiliary dataset $\mathcal{D}_\text{aux}$, and the batch size $B$ as inputs. The algorithm consists of three main stages: (1) estimating the variables needed to instantiate the gradient bridge, (2) propagating the accessible gradient to the missing logit gradient, and (3) converting the resulting continuous label estimate into a valid classwise label distribution.

First, GDBR extracts the accessible weight gradient $\overline{\nabla\mathbf{W}}^{[1]}$ from the shared update. Depending on where partial encryption starts, this gradient corresponds to the first unencrypted FC/Conv layer that can be connected to the encrypted output layer through the bridge. Since the activation $\mathbf{a}^{[1]}$ and the Softmax probabilities $\mathbf{p}$ of the private batch are not directly observable, GDBR estimates their batch-level representatives $\widetilde{\mathbf{a}}^{[1]}$ and $\widetilde{\mathbf{p}}$ using the auxiliary dataset $\mathcal{D}_\text{aux}$. These estimates instantiate Assumptions~\ref{assumption:feat_approx} and~\ref{assumption:prob_approx}, respectively.

Second, GDBR reconstructs the missing gradient with respect to the logits. It first computes the activation gradient $\overline{\nabla\mathbf{a}}^{[1]}$ from the observed weight gradient and the estimated feature vector using Eq.~\eqref{eq:batch_first_stack_grad}. The algorithm then treats this value as the input gradient to the next layer and recursively applies Eq.~\eqref{eq:batch_recursive_stack_grad} through the remaining head layers. This recursive propagation is the operational form of the gradient bridge: it maps the unencrypted feature-side gradient to the encrypted or unavailable classification-head signal. At the final layer, Eq.~\eqref{eq:gdbr_grad_logits} yields the reconstructed batch-averaged logit gradient $\overline{\nabla\mathbf{z}}^{[L]}$.

Third, GDBR solves Eq.~\eqref{eq:label_recovery} to obtain a continuous estimate of the classwise count vector. Ideally, $\bm{\lambda}$ should be a histogram of labels in the batch: each entry must be a non-negative integer and the total count must equal the batch size $B$. However, because $\widetilde{\mathbf{a}}^{[1]}$, $\widetilde{\mathbf{p}}$, and the reconstructed logit gradient are approximations, the direct solution of Eq.~\eqref{eq:label_recovery} can contain fractional values or small negative numerical artifacts. We therefore project the continuous estimate onto the feasible integer simplex:
\begin{equation}
  \label{eq:integer_projection}
  \begin{aligned}
    \hat{\bm{\lambda}} = 
    \arg\min_{\bm{\lambda}'}
    \quad & \left\| \bm{\lambda} - \bm{\lambda}' \right\|_2 \\
    \mathrm{s.t.}\quad
    & \sum_{c=1}^{C} \lambda'_c = B, \\
    & \lambda'_c \in \mathbb{Z}_{\geq 0},\quad \forall c \in \{1,\ldots,C\}.
  \end{aligned}
\end{equation}

This projection selects the closest valid label-count vector to the continuous estimate. The equality constraint ensures that the recovered distribution accounts for exactly all $B$ samples, while the integrality and non-negativity constraints ensure that each component can be interpreted as the number of samples from class $c$. In practice, this step behaves like constrained rounding: fractional counts are rounded while correcting the total count so that no sample is lost or duplicated. The output $\hat{\bm{\lambda}}$ is therefore a valid recovered label histogram for the private batch.

\begin{algorithm}[t]
  \caption{GDBR Label Recovery Attack}
  \label{alg:gdbr_attack}
  \begin{algorithmic}[1]
    \Require Initialized global model $\bm{\theta}_\text{0}$, batch-averaged gradient $\overline{\nabla\bm{\theta}}_\text{0}$, auxiliary dataset $\mathcal{D}_\text{aux}$, batch size $B$
    \Ensure Recovered classwise label distribution $\hat{\bm{\lambda}}$
    \State Extract weight gradient $\overline{\nabla\mathbf{W}}^{[1]}$ from $\overline{\nabla\bm{\theta}}_\text{0}$
    \State Estimate feature statistics $\widetilde{\mathbf{a}}^{[1]}$ using $\mathcal{D}_\text{aux}$
    \State Estimate probability statistics $\widetilde{\mathbf{p}}$ using $\mathcal{D}_\text{aux}$
    \State Compute activation gradient $\overline{\nabla\mathbf{a}}^{[1]}$ via Eq.~\eqref{eq:batch_first_stack_grad}
    \State Initialize recursive input: $\overline{\nabla\mathbf{x}}^{[2]} \leftarrow \overline{\nabla\mathbf{a}}^{[1]}$
    \For{$l = 2$ to $L-1$}
      \State Propagate $\overline{\nabla\mathbf{x}}^{[l+1]}$ using recursive bridge Eq.~\eqref{eq:batch_recursive_stack_grad}
    \EndFor
    \State Recover logits gradient $\overline{\nabla\mathbf{z}}^{[L]}$ using Eq.~\eqref{eq:gdbr_grad_logits}
    \State Solve for continuous label-count estimate $\bm{\lambda}$ using Eq.~\eqref{eq:label_recovery}
    \State $\hat{\bm{\lambda}} \leftarrow$ integer projection of $\bm{\lambda}$ using Eq.~\eqref{eq:integer_projection}
    \State \Return $\hat{\bm{\lambda}}$
  \end{algorithmic}
\end{algorithm}

\section{Experiments}

In this section, we empirically validate the effectiveness and versatility of the GDBR attack. We structure our evaluation into six key components: (1) \emph{empirical verification} of the underlying assumptions regarding representation homogeneity; (2) \emph{comparative benchmarking} against existing label restoration attacks; (3) \emph{sensitivity analysis} concerning batch size, layer depth, class distribution, and auxiliary information; (4) \emph{architectural generalization} to transformer-based vision models; (5) \emph{evaluation under local multi-step updates}; and (6) \emph{robustness assessment} under active gradient defense strategies.

\subsection{Experimental Setup}

\subsubsection{Datasets and Models}

We evaluate the GDBR attack across five image classification benchmarks, ranging from gray-scale digits to large-scale images. Here are the details of the evaluation datasets:
\begin{itemize}
  \item \textbf{MNIST} \cite{lecun1998gradient} contains 60k training and 10k testing gray-scale digits of size 28$\times$28 with 10 classes.
  \item \textbf{SVHN} \cite{netzer2011reading} contains 73k training and 26k testing color digit images of size 32$\times$32 with 10 classes.
  \item \textbf{CIFAR-10/100} \cite{krizhevsky2009learning} contains 50k training and 10k testing images of size 32$\times$32 with 10/100 classes.
  \item \textbf{ImageNet-1K} \cite{deng2009imagenet} contains 1.28M training and 50k validation images of size 224$\times$224 with 1,000 classes.
\end{itemize}

To assess architectural generalization, we employ six distinct neural network architectures:
\begin{itemize}
  \item \textbf{MLP} \cite{rumelhart1987learning}: A multilayer perceptron (MLP) is a fully-connected feed-forward neural network. In our experiments, we use an MLP consisting of six layers, with the number of hidden neurons in each layer set to 2048, 1024, 512, 256, 128, and 64, respectively. The final layer has 10 output neurons corresponding to the 10 prediction classes.
  \item \textbf{LeNet} \cite{lecun1998gradient}: A classic convolutional neural network (CNN) architecture designed for digit recognition, consisting of two convolutional layers followed by two fully-connected layers.
  \item \textbf{AlexNet} \cite{krizhevsky2012imagenet}: A pioneering deep CNN architecture that won the ImageNet Large Scale Visual Recognition Challenge (ILSVRC) in 2012, consisting of five convolutional layers and three fully-connected layers.
  \item \textbf{VGG series} \cite{simonyan2014very}: A deep CNN architecture characterized by its stack of small 3$\times$3 convolutional filters and a large number of layers, typically consisting of 11, 16 or 19 layers. We use VGG11 in our experiments.
  \item \textbf{ResNet series} \cite{he2016deep}: A deep residual network architecture that introduced skip connections to alleviate the vanishing gradient problem, allowing for the training of very deep networks. We use ResNet18 for our experiments.
  \item \textbf{SmallViT} \cite{dosovitskiy2021image}: A compact Vision Transformer for small images. It first projects non-overlapping image patches into tokens, prepends a learnable class token, processes the sequence with Transformer encoder blocks, and uses a two-layer MLP classification head.
\end{itemize}

All these models are implemented using PyTorch \cite{paszke2019pytorch}, which provides a flexible and efficient platform for building and training deep learning models.

\begin{figure*}[!t]
  \centering
  \begin{subfigure}[b]{0.95\textwidth}
    \centering
    \includegraphics[width=\textwidth]{figures/distribution/resnet18_cifar100_input_feats.png}
    \caption{Input feature distributions of the FC layer in ResNet18 ($B = 512$). The shown dimensions are randomly selected, and the values in each dimension approximately follow a Gaussian distribution whose mean can be estimated from similar training samples.}
  \end{subfigure}
  \hfill
  \begin{subfigure}[b]{0.95\textwidth}
    \centering
    \includegraphics[width=\textwidth]{figures/distribution/resnet18_cifar100_output_probs.png}
    \caption{Softmax probability distributions of ResNet18 model ($B = 512$). Five 100-fold classes are chosen for illustration, and the probabilities of different classes exhibit similar Gaussian distributions, with mean values close to 0.01, across various training samples.}
  \end{subfigure}
  \caption{The distributions of input features and output probabilities of the last FC layer in ResNet18, trained on CIFAR-100. The x-axis represents the values of features/probabilities, and the y-axis represents the number of samples.}
  \label{fig:feat_prob_dist}
\end{figure*}

\subsubsection{Implementation Details}
\label{sec:impl_details}

We evaluate GDBR in the FL setting consisting of $N$ clients, among which one client is designated as the victim for the purpose of label distribution inference. Each client possesses a private local dataset and participates in collaborative model training using the FedSGD algorithm~\cite{mcmahan2017communication}. In each communication round, a client randomly samples a mini-batch of $B$ data points from its local dataset; unless otherwise specified, we set the batch size $B$ to 64.

For experiments involving ResNet series, we adjust the architecture by replacing the global average pooling layer with a depthwise convolution, which has the same kernel size as the input feature map and a stride of one. Such modification does not affect the model's performance but facilitates the application of GDBR. For ViT experiments, we use a patch size of 4, embedding dimension of 192, 6 Transformer blocks, 3 attention heads, and a two-layer MLP classification head with 256 hidden units and ReLU activation. The model parameters are initialized using the default initialization scheme of PyTorch. For the classification head layers, we initialize the weights by sampling from a uniform distribution in the range $[0.01, 0.2]$.

To estimate the input features $\widetilde{\mathbf{a}}^{[1]}$ of the first stack layer and the model's output probabilities $\widetilde{\mathbf{p}}$, we employ an auxiliary dataset, which could be either the validation or test set, depending on availability. From this auxiliary dataset, we randomly sample a total of 1,000 samples, ensuring a uniform distribution across all classes to avoid class imbalance. These samples are used to approximate the feature and output distributions required by GDBR.

To further examine FedAvg-style local training (i.e., multiple SGD steps between two rounds), we also evaluate GDBR under local multi-step updates in Section~\ref{sec:local_updates}. Unless otherwise specified, these experiments use a batch size of 32, local learning rates in $\{0.001, 0.002, 0.005\}$, and local update steps in $\{1, 2, 4\}$.

To ensure statistical robustness and minimize the impact of experimental randomness, we repeat each experiment 20 times and report the mean performance across these runs. All experiments are implemented using PyTorch 2.5.1 and executed on a workstation equipped with an Intel i9-10900K CPU running at 3.70\,GHz, 64\,GB of RAM, and an NVIDIA GeForce RTX 4090 GPU with 24\,GB of memory. CUDA 12.1+ is used to accelerate computations on the GPU.

\subsubsection{Baselines and Evaluation Metrics}

We compare our proposed GDBR with two state-of-the-art baselines: ZLG~\cite{geng2021towards} and LLG~\cite{wainakh2022user}. ZLG leverages the relationship between the labels and the gradient $\nabla\mathbf{W}^{[L]}$ in the final fully-connected (FC) layer to recover labels, while LLG exploits both the direction and magnitude of $\nabla\mathbf{W}^{[L]}$ to restore the labels of batch samples. However, in our attack scenario, the gradient $\nabla\mathbf{W}^{[L]}$ is not directly accessible. To ensure a fair comparison, we provide both ZLG and LLG with a partial ground-truth gradient $\nabla\hat{\mathbf{W}}^{[L]}$ for label recovery. The approaches of providing the partial gradient are detailed in Section~\ref{sec:grad_sim_init}. Additionally, we supply the same subset of auxiliary data to GDBR and the baseline methods to estimate the averaged features of $\widetilde{\mathbf{a}}^{[1]}$ and the model's output Softmax probabilities $\widetilde{\mathbf{p}}$.

To quantify the performance of label restoration, we adopt two complementary metrics: \emph{Instance-level Accuracy} (InsAcc) and \emph{Class-level Accuracy} (ClsAcc)~\cite{zhang2024posterior}. These two metrics capture different levels of label leakage. InsAcc evaluates whether the attacker recovers the batch labels with the correct multiplicities, while ClsAcc evaluates whether the attacker identifies which classes appear in the batch, regardless of how many samples each class contributes.

\begin{itemize}
  \item \textbf{Instance-level Accuracy (InsAcc)}: This metric is multiset-based and evaluates how accurately the attacker restores the individual labels in the batch. Label multiplicities are preserved. For example, if a target batch contains 10 dog images and 6 cat images, InsAcc measures how many of these 16 instance labels are correctly recovered, including the correct number of dogs and cats.
  \item \textbf{Class-level Accuracy (ClsAcc)}: This metric is set-based and evaluates whether the attacker correctly detects the presence or absence of classes in the batch. It ignores multiplicities. In the same example, ClsAcc measures whether the attacker identifies that dog and cat are present and, equally importantly, that other classes such as bird are absent. This is still challenging for label-inference attacks because the attacker must infer the batch-level class support from gradients.
\end{itemize}

Both metrics are realized through Jaccard similarity, but over different objects. For InsAcc, let $\hat{\mathbf{y}}$ and $\mathbf{y}$ denote the recovered and ground-truth batch-label multisets, respectively. The multiset intersection counts the minimum recovered and ground-truth multiplicity for each class, and the multiset union counts the maximum multiplicity for each class. InsAcc is computed as:
\begin{equation*}
  \text{InsAcc} = J(\hat{\mathbf{y}}, \mathbf{y}) = \frac{|\hat{\mathbf{y}} \cap \mathbf{y}|}{|\hat{\mathbf{y}} \cup \mathbf{y}|} = \frac{|\hat{\mathbf{y}} \cap \mathbf{y}|}{|\hat{\mathbf{y}}| + |\mathbf{y}| - |\hat{\mathbf{y}} \cap \mathbf{y}|},
\end{equation*}
where $|\hat{\mathbf{y}} \cap \mathbf{y}|$ is the number of correctly recovered instance labels under multiset matching. For ClsAcc, we first convert batch labels into class-presence sets. Let $\hat{\mathbf{c}}=\mathrm{supp}(\hat{\mathbf{y}})$ and $\mathbf{c}=\mathrm{supp}(\mathbf{y})$ denote the predicted and ground-truth sets of classes appearing in the batch. ClsAcc is then computed as:
\begin{equation*}
  \text{ClsAcc} = J(\hat{\mathbf{c}}, \mathbf{c}) = \frac{|\hat{\mathbf{c}} \cap \mathbf{c}|}{|\hat{\mathbf{c}} \cup \mathbf{c}|} = \frac{|\hat{\mathbf{c}} \cap \mathbf{c}|}{|\hat{\mathbf{c}}| + |\mathbf{c}| - |\hat{\mathbf{c}} \cap \mathbf{c}|},
\end{equation*}
where $\hat{\mathbf{c}}$ and $\mathbf{c}$ contain each class at most once. Thus, InsAcc reflects fine-grained recovery of per-instance label counts, whereas ClsAcc reflects recovery of the training batch's class support.

\begin{figure*}[!t]
  \centering
  \begin{subfigure}[b]{0.24\textwidth}
    \centering
    \includegraphics[width=\textwidth]{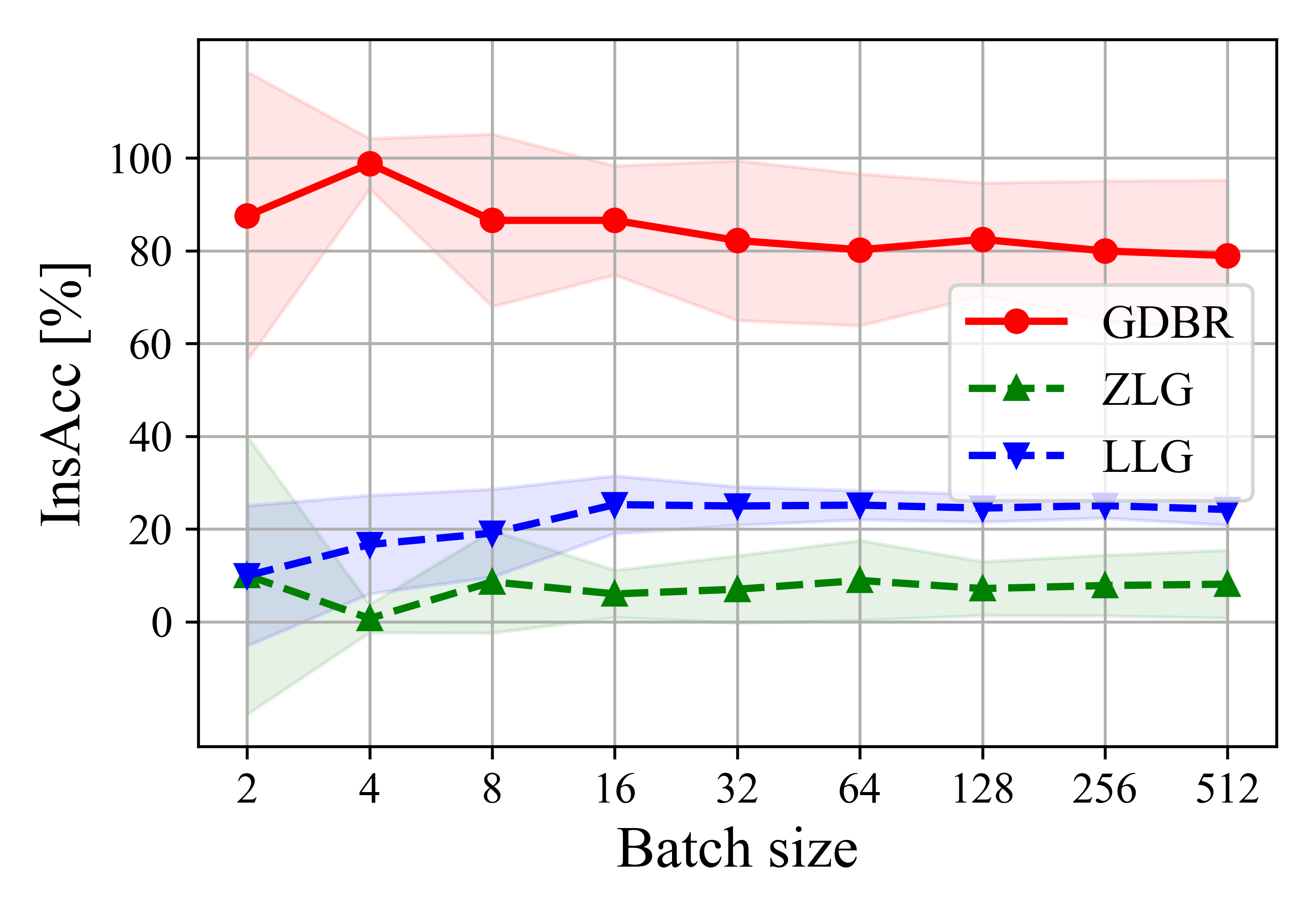}
    \caption{InsAcc on MNIST.}
  \end{subfigure}
  \hfill
  \begin{subfigure}[b]{0.24\textwidth}
    \centering
    \includegraphics[width=\textwidth]{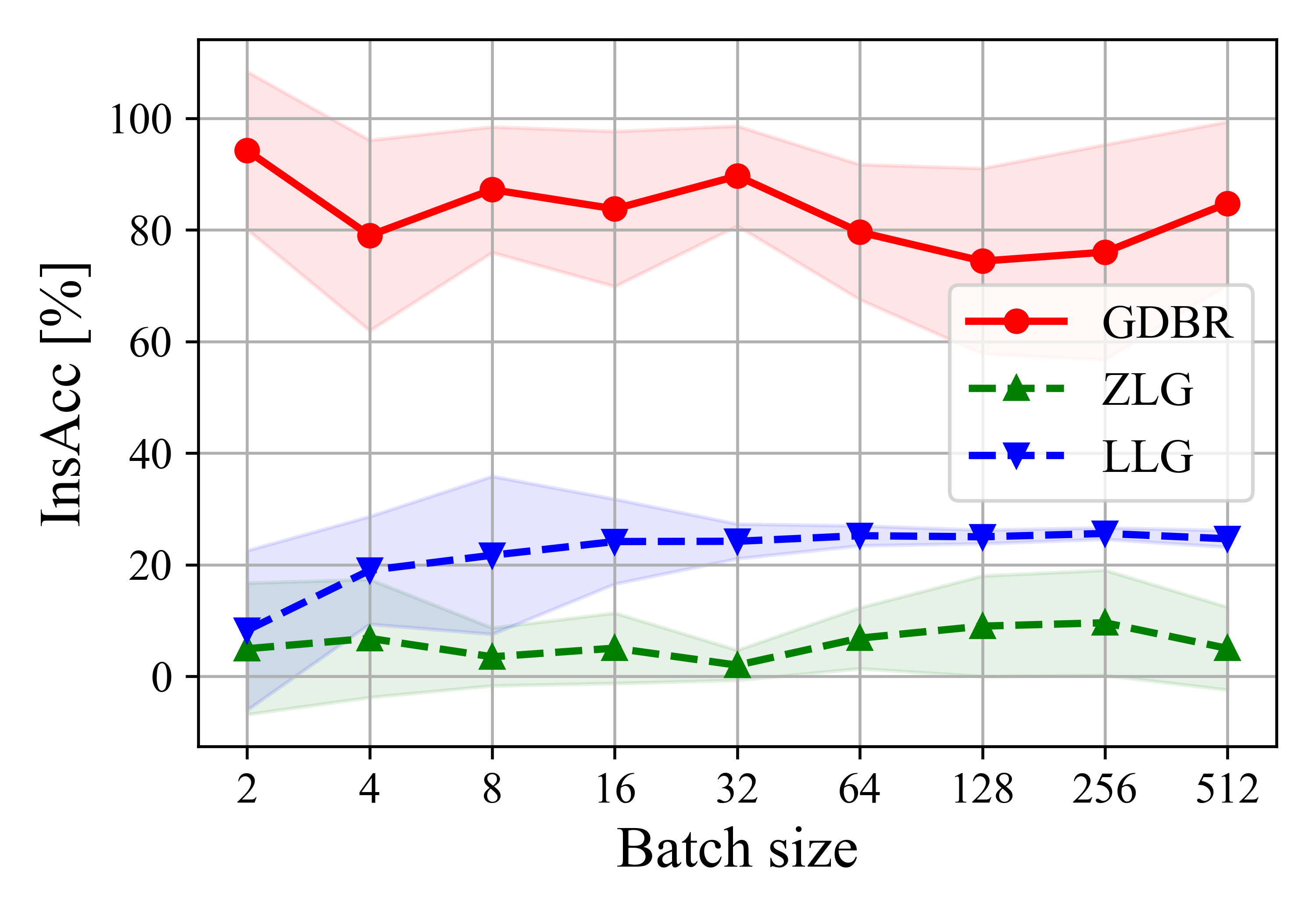}
    \caption{InsAcc on SVHN.}
  \end{subfigure}
  \hfill
  \begin{subfigure}[b]{0.24\textwidth}
    \centering
    \includegraphics[width=\textwidth]{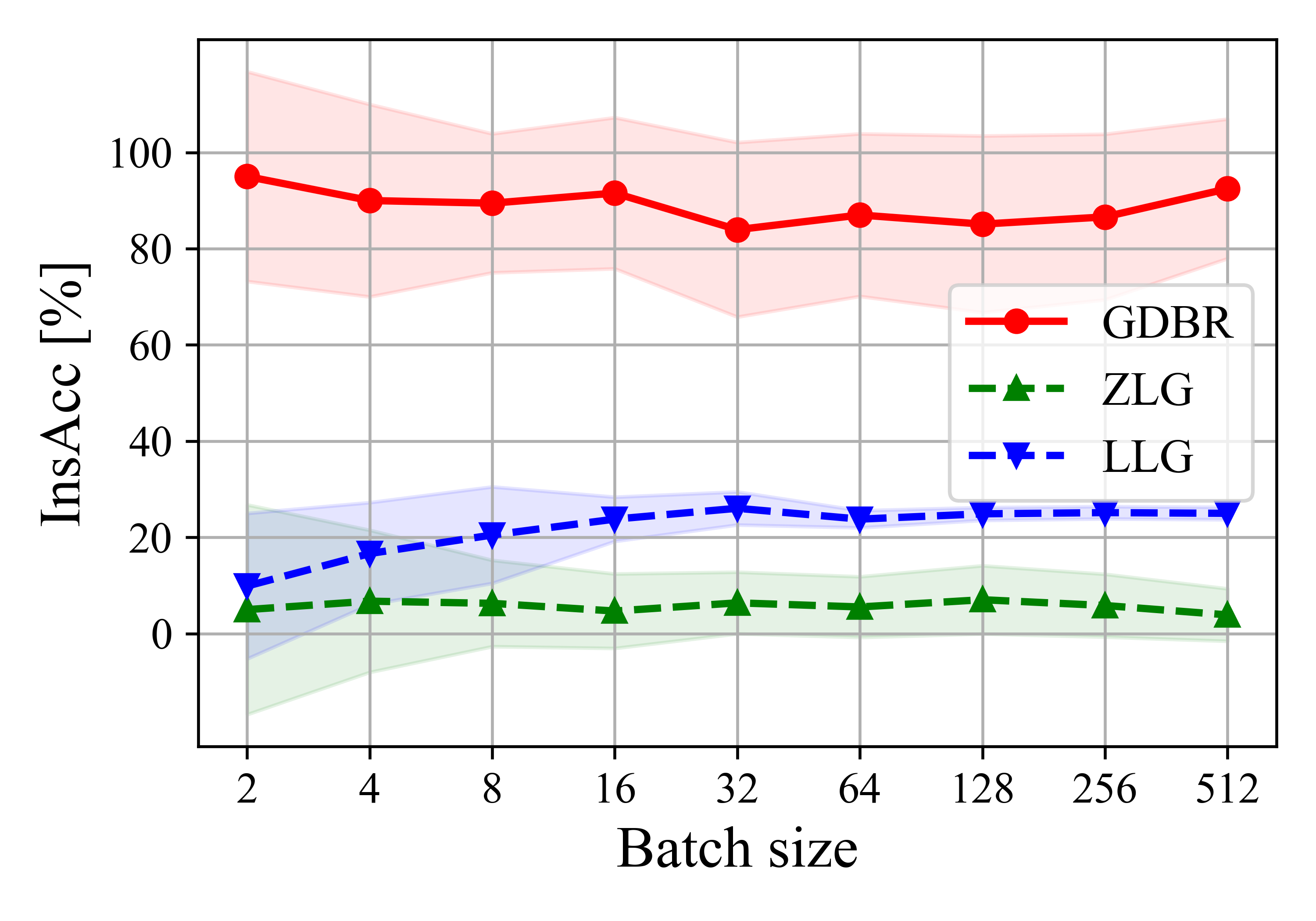}
    \caption{InsAcc on CIFAR-10.}
  \end{subfigure}
  \hfill
  \begin{subfigure}[b]{0.24\textwidth}
    \centering
    \includegraphics[width=\textwidth]{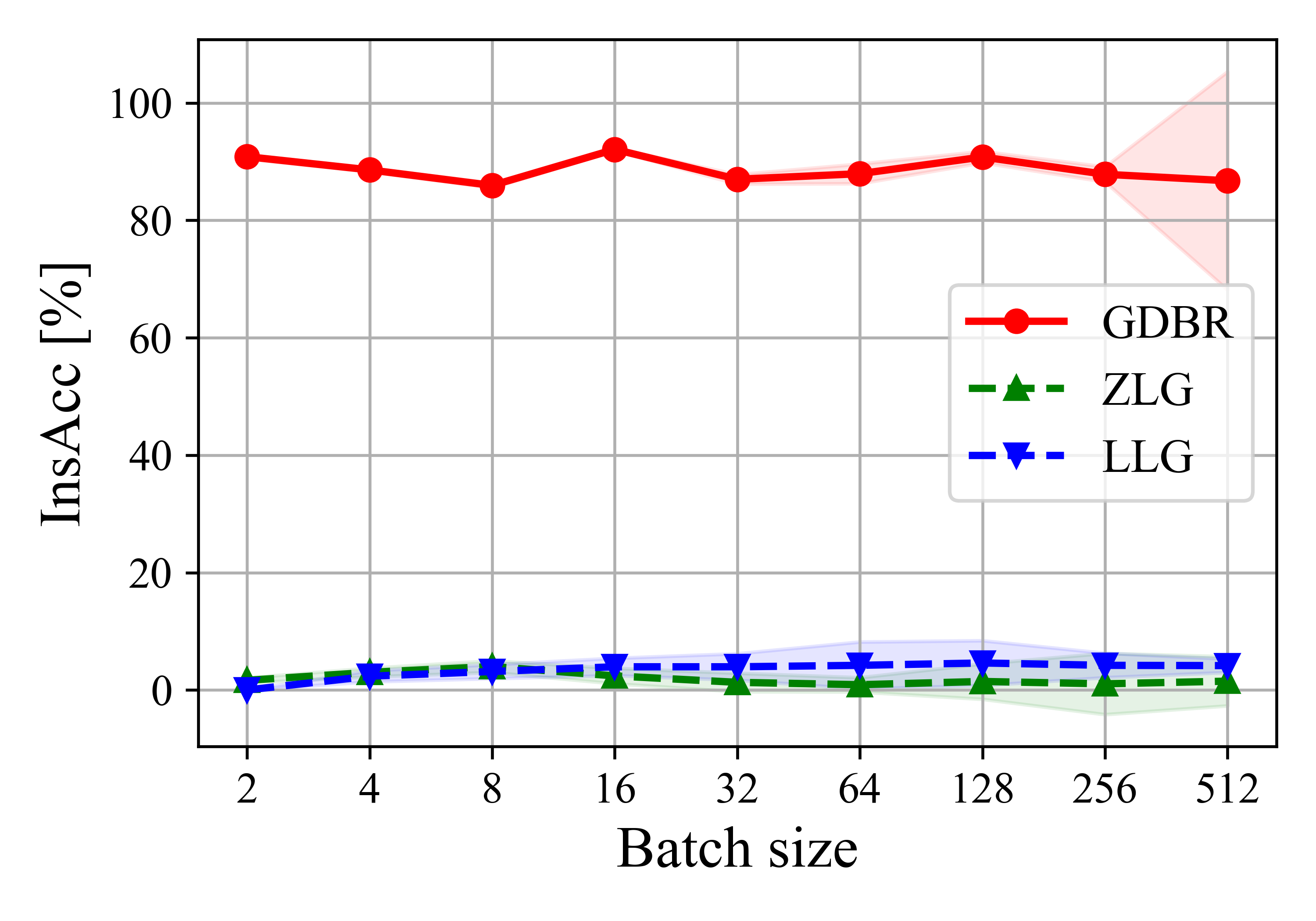}
    \caption{InsAcc on CIFAR-100.}
  \end{subfigure}
  \hfill
  \begin{subfigure}[b]{0.24\textwidth}
    \centering
    \includegraphics[width=\textwidth]{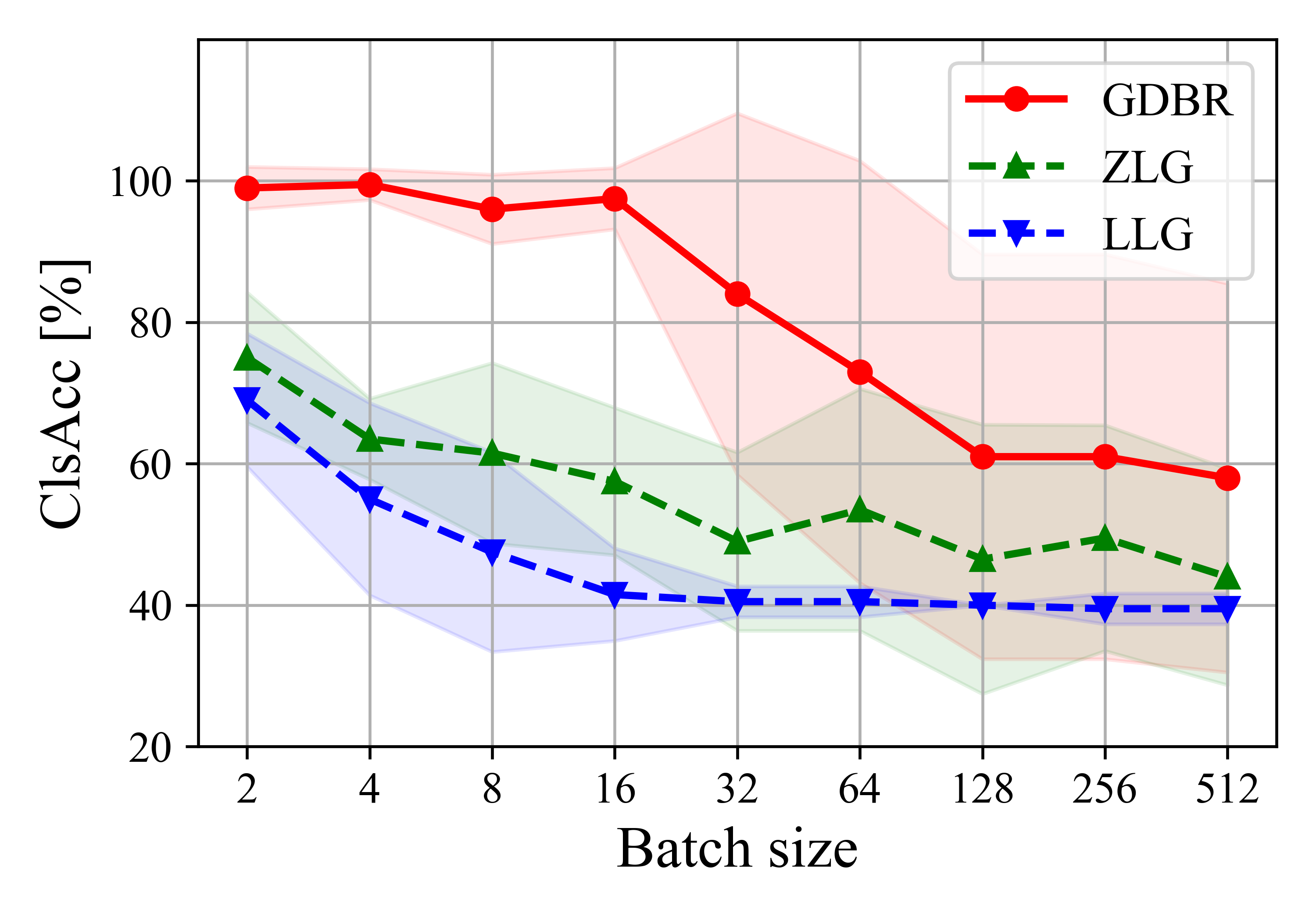}
    \caption{ClsAcc on MNIST.}
  \end{subfigure}
  \hfill
  \begin{subfigure}[b]{0.24\textwidth}
    \centering
    \includegraphics[width=\textwidth]{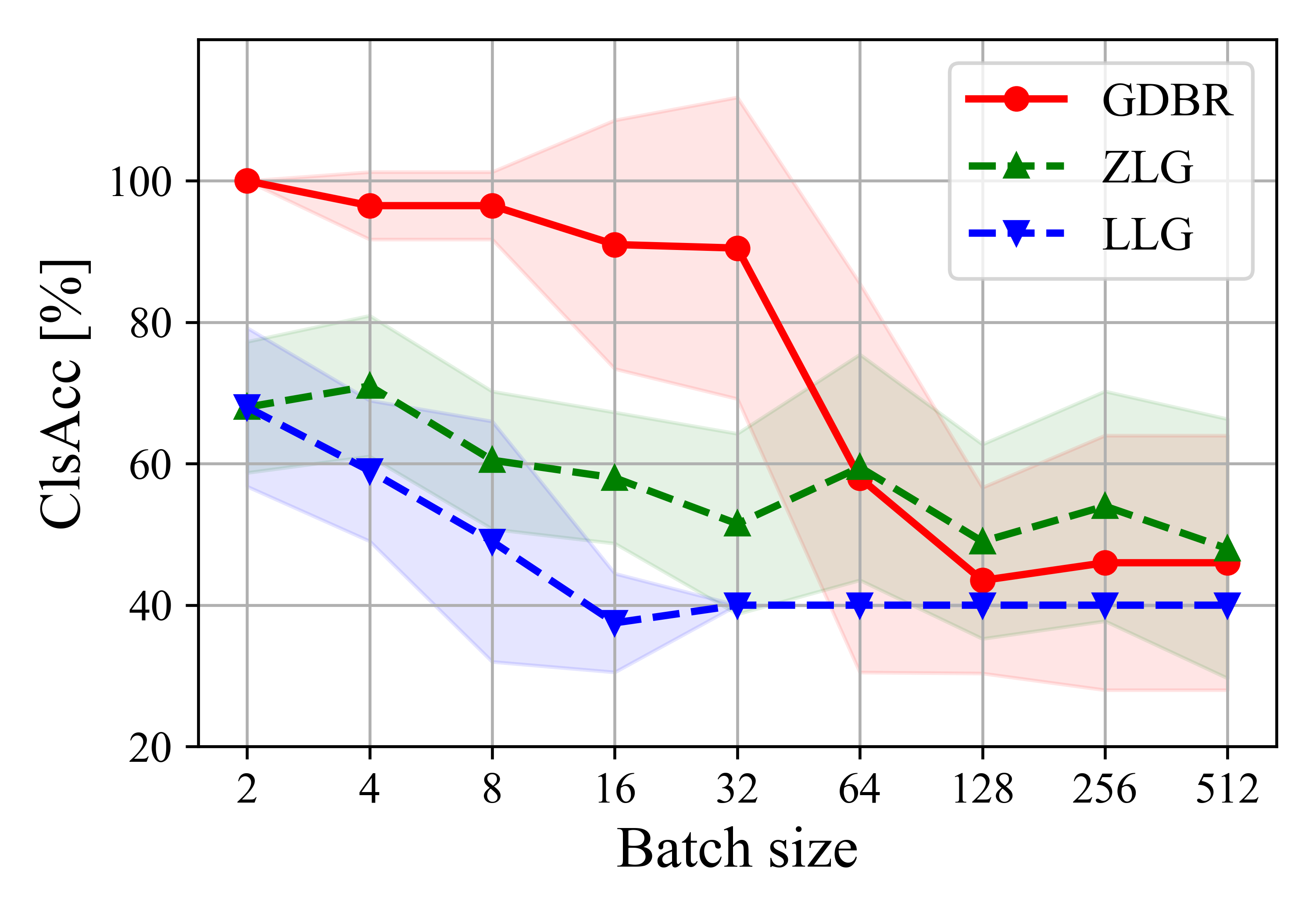}
    \caption{ClsAcc on SVHN.}
  \end{subfigure}
  \hfill
  \begin{subfigure}[b]{0.24\textwidth}
    \centering
    \includegraphics[width=\textwidth]{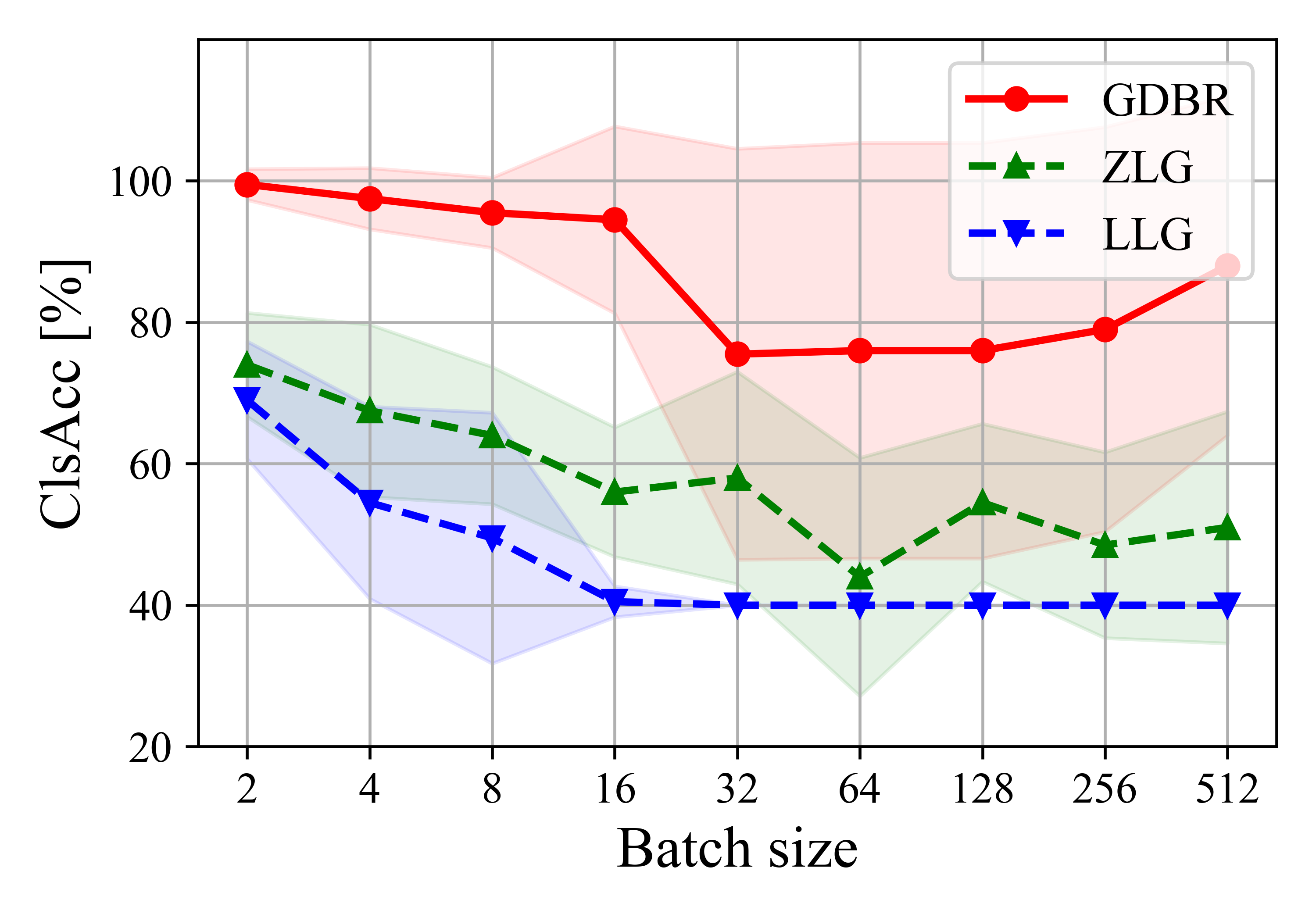}
    \caption{ClsAcc on CIFAR-10.}
  \end{subfigure}
  \hfill
  \begin{subfigure}[b]{0.24\textwidth}
    \centering
    \includegraphics[width=\textwidth]{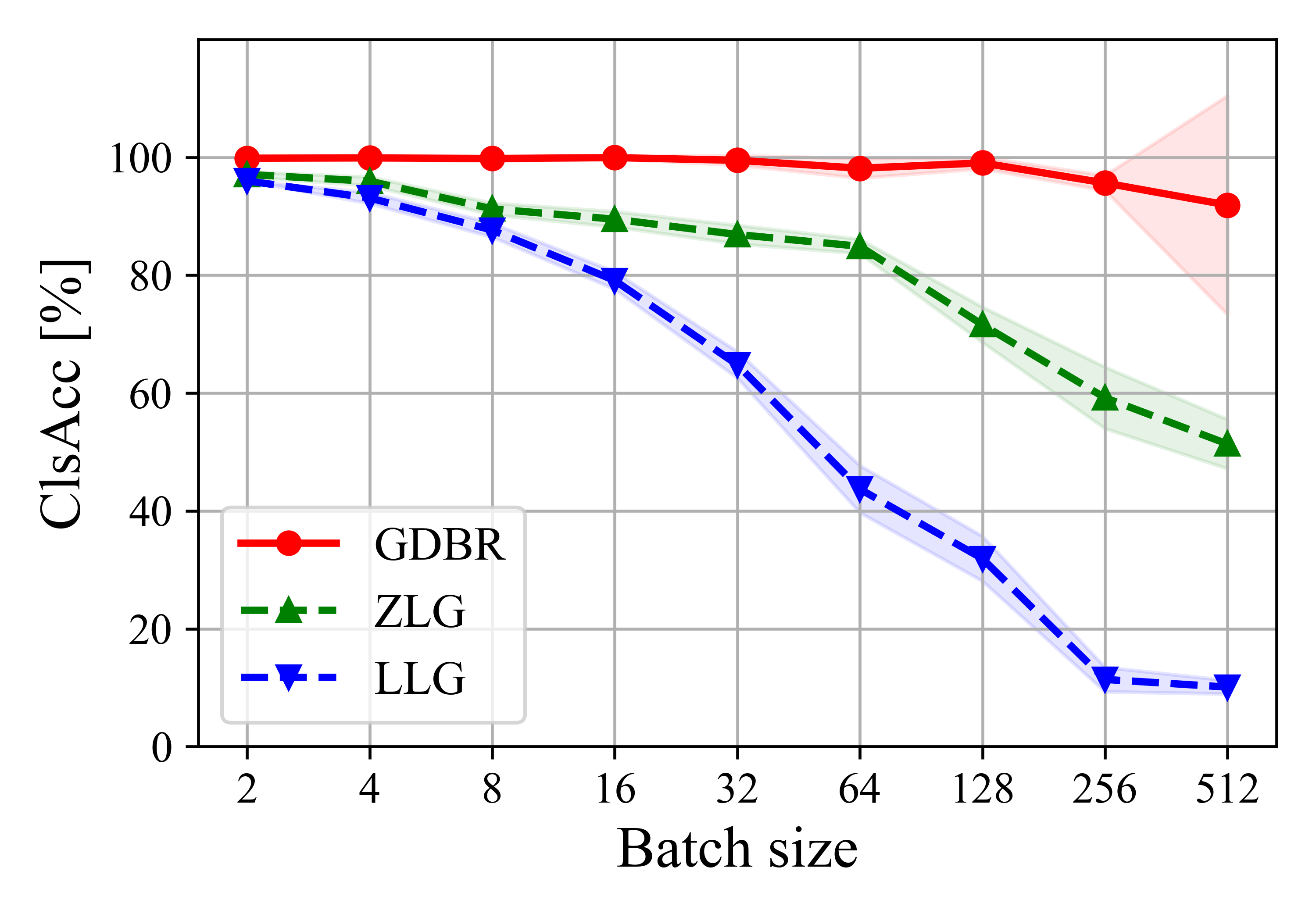}
    \caption{ClsAcc on CIFAR-100.}
  \end{subfigure}
  \caption{Comparison of GDBR with baselines on InsAcc and ClsAcc across various batch sizes. Experiments are conducted on four model-dataset pairs: LeNet on MNIST, AlexNet on SVHN, VGG11 on CIFAR-10, and ResNet18 on CIFAR-100, with batch sizes ranging from 2 to 512. Results are averaged over 20 trials (line and dots); shaded area shows standard deviation.}
  \label{fig:batch_size_comp}
\end{figure*}

\subsection{Verification of Assumptions}
\label{sec:verify_assumptions}

In Section~\ref{sec:verify_assumptions}, we have presented the assumptions of the homogeneity of features and probabilities of the classification head layer's input and output within a batch. Hence, we first validate these assumptions through empirical analysis. We conduct experiments using ResNet18 trained on the CIFAR-100 dataset with a batch size of $B = 512$.

Specifically, we randomly select five feature dimensions (23, 91, 230, 271, and 477) and five classes (10, 30, 50, 70, and 90) to illustrate the distributions of the extracted features and output probabilities from the final FC layer of classification head. The results of these visualizations are presented in Fig.~\ref{fig:feat_prob_dist}.

The observed distributions of input features and output probabilities across different samples are highly similar, regardless of the specific dimension or index selected. This consistency indicates that the features and probabilities can be effectively approximated using auxiliary data.

\subsection{Comparison With Baselines}

We compare the proposed GDBR method with representative baselines across a diverse set of model-dataset configurations, aiming to evaluate label recovery performance under varying conditions. 
The tested combinations include: LeNet on MNIST, AlexNet on SVHN, VGG11 on CIFAR-10, and ResNet18 on CIFAR-100, covering both simple and complex architectures as well as datasets of different visual characteristics. 
Batch sizes range from $2$ to $512$, with each batch randomly sampled from a subset of all available classes in the training set to simulate realistic, heterogeneous label distributions.

For a fair comparison, we provide ZLG and LLG with the weight gradient $\nabla\mathbf{W}^{[L]}$ of the final layer, but with the column elements shuffled to prevent direct label inference while still retaining comparable gradient information. 
This setting allows us to test the robustness of the baselines when valuable yet perturbed information is available.

As shown in Fig.~\ref{fig:batch_size_comp}, GDBR consistently outperforms ZLG and LLG in both InsAcc and ClsAcc across almost all evaluated settings. 
Even when the baselines are supplied with enriched gradient information, they struggle to correctly recover the labels of the target batch. 
In contrast, GDBR achieves notably higher recovery accuracies, demonstrating its effectiveness and adaptability across different architectures, datasets, and batch configurations.

\subsection{Analysis of Different Factors}

In this subsection, we analyze how several factors affect the performance of GDBR, including gradients from different layers, the class distribution within batch training data, and the use of auxiliary versus dummy data.

\subsubsection{Effect of Gradients From Different Layers}

We examine how the provided gradients $\overline{\nabla\mathbf{W}}^{[1]}$ from different FC layers affects the performance of GDBR. We conduct the experiments on the 6-layer MLP, LeNet, AlexNet, and VGG11 models, which contain multiple FC layers. For the MLP model, we vary the layer from which the gradient is extracted, ranging from the 2nd to the 6th layer (the input layer). For LeNet, AlexNet, and VGG11, we consider gradients from all FC layers, including both intermediate and output layers. The models are trained on SVHN and CIFAR-10 datasets with a batch size of $B = 64$.

\begin{figure}[!b]
  \centering
  \begin{subfigure}[b]{0.232\textwidth}
    \centering
    \includegraphics[width=\textwidth]{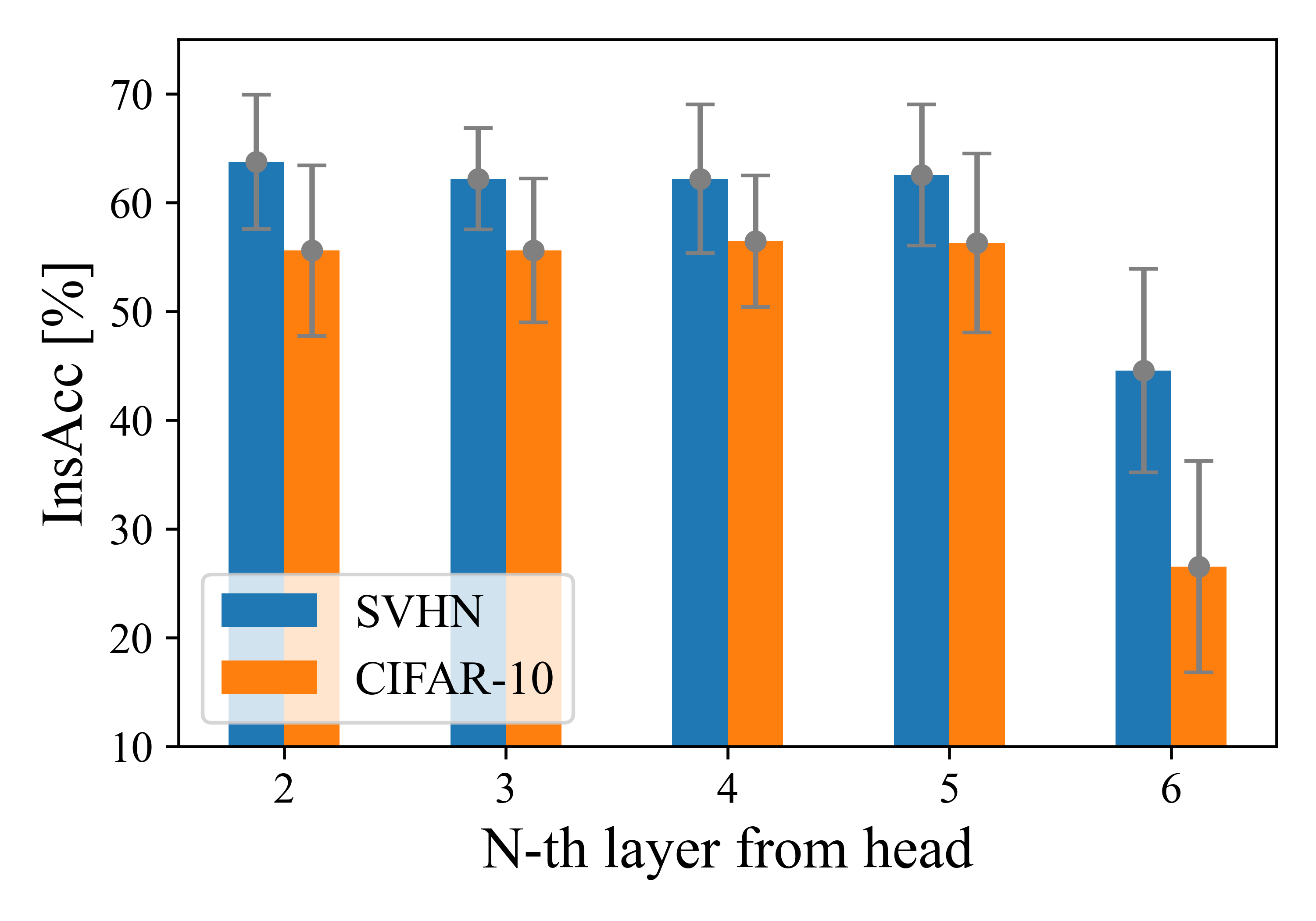}
    \caption{InsAcc on 6-layer MLP.}
    \label{fig:mlp_insacc}
  \end{subfigure}
  \hfill
  \begin{subfigure}[b]{0.232\textwidth}
    \centering
    \includegraphics[width=\textwidth]{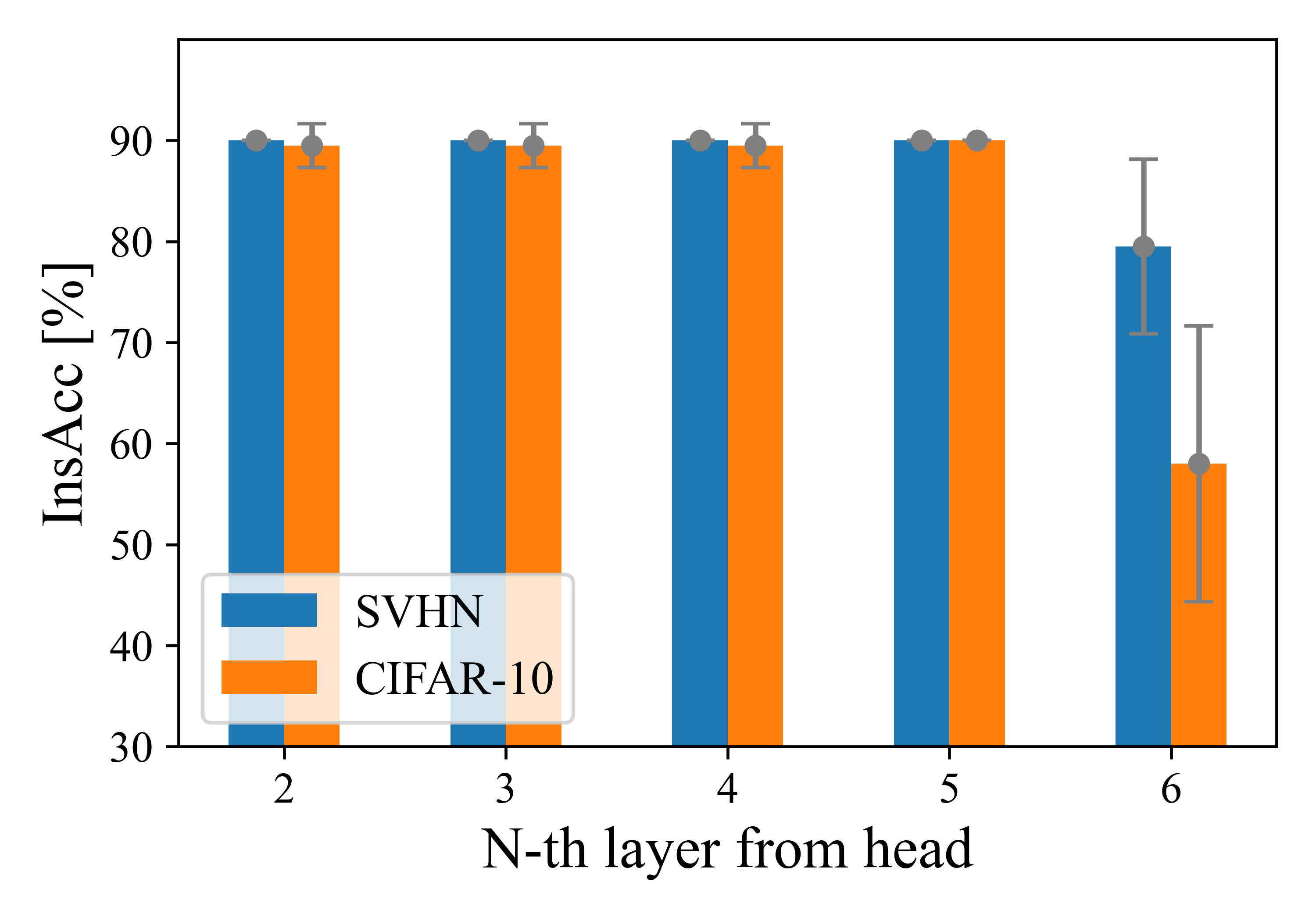}
    \caption{ClsAcc on 6-layer MLP.}
    \label{fig:mlp_clsacc}
  \end{subfigure}
  \hfill
  \begin{subfigure}[b]{0.232\textwidth}
    \centering
    \includegraphics[width=\textwidth]{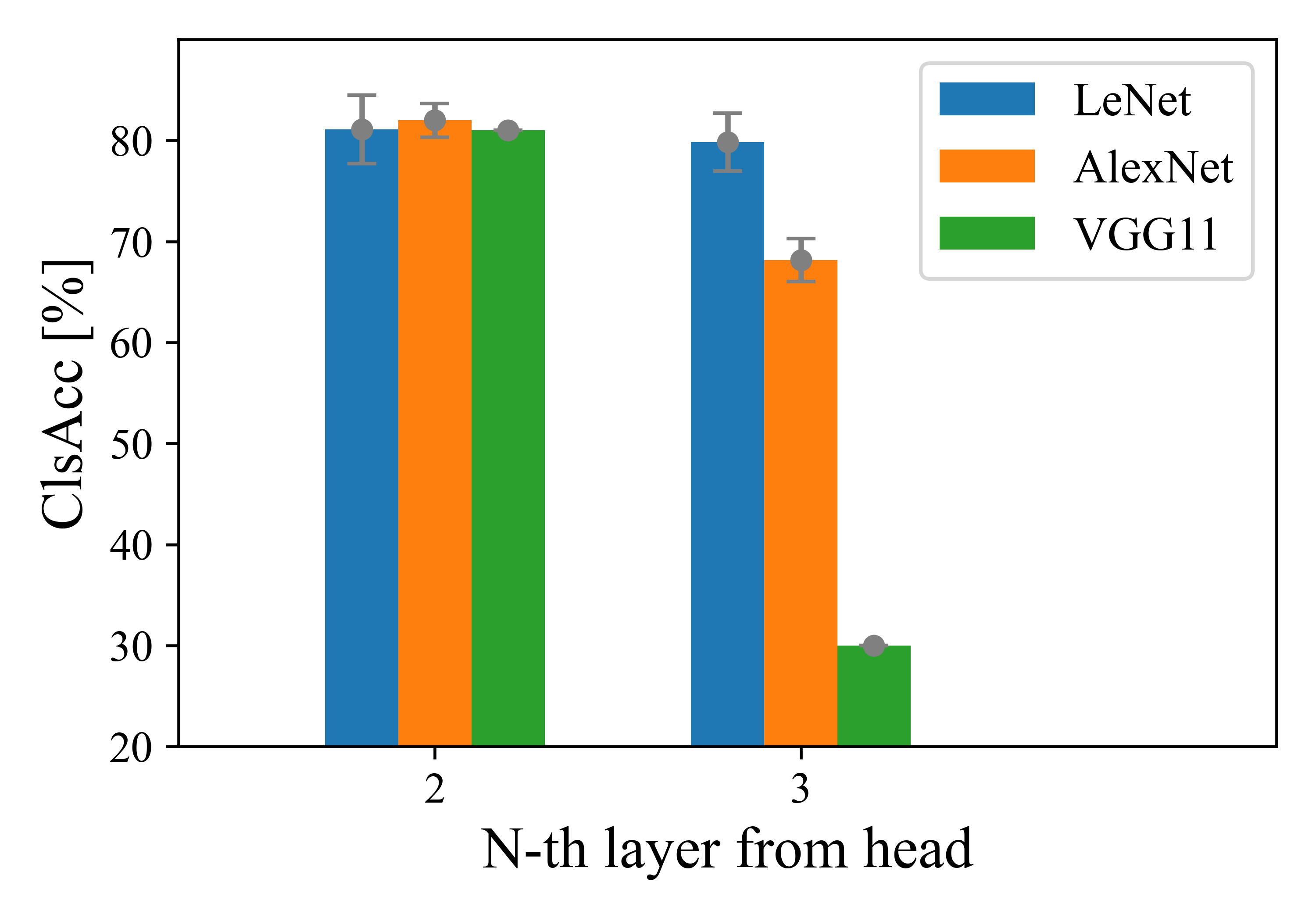}
    \caption{InsAcc on CNN models.}
    \label{fig:cnn_insacc}
  \end{subfigure}
  \hfill
  \begin{subfigure}[b]{0.232\textwidth}
    \centering
    \includegraphics[width=\textwidth]{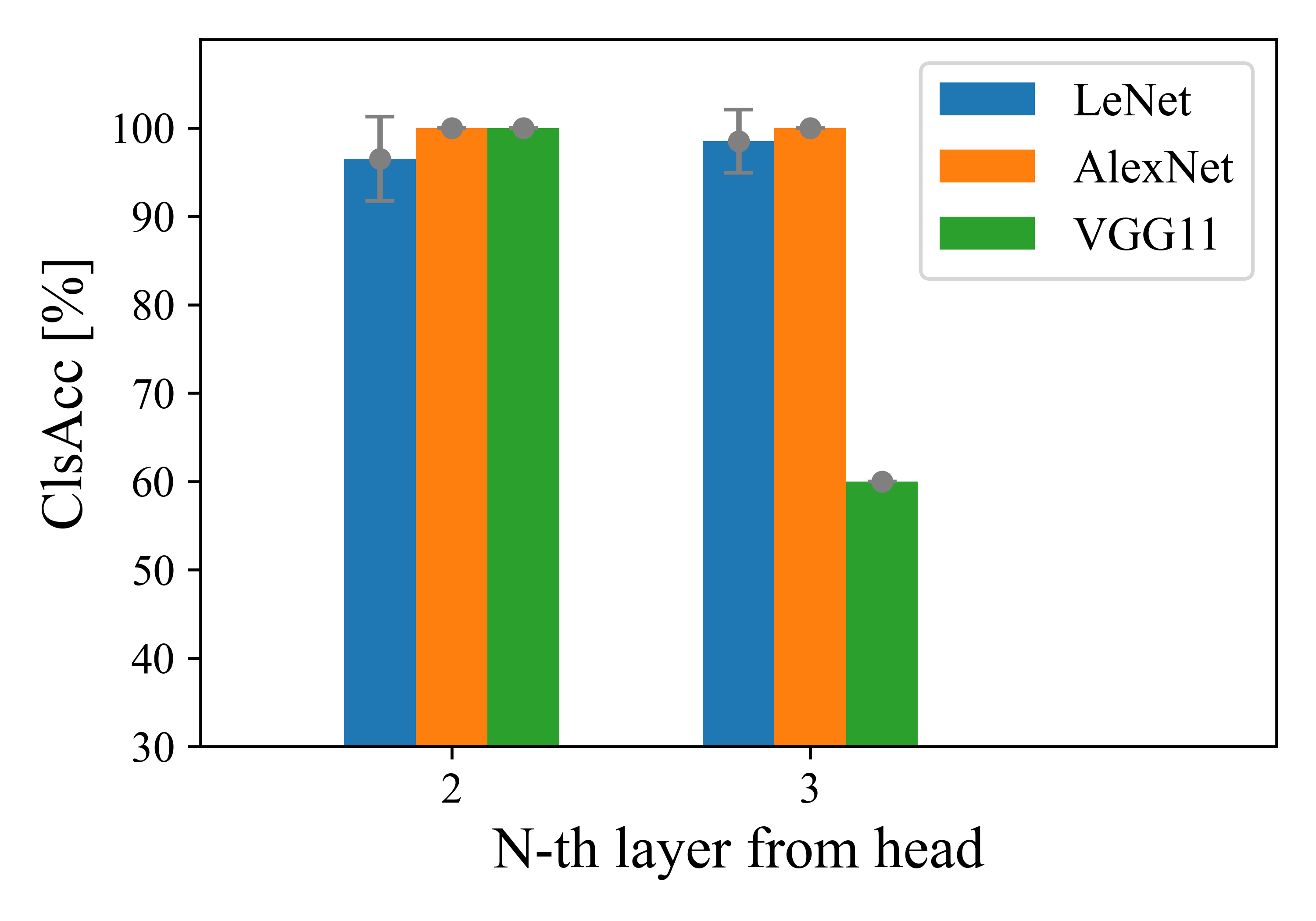}
    \caption{ClsAcc on CNN models.}
    \label{fig:cnn_clsacc}
  \end{subfigure}
  \caption{Comparison of the shared gradients $\overline{\nabla\mathbf{W}}^{[1]}$ from different FC layers in a 6-layer MLP, LeNet, AlexNet, and VGG11. Experiments are conducted on the models trained on SVHN and CIFAR-10 with batch size $B = 64$. The x-axis indicates the layer distance to the classification head's output layer, where 1 means the final output layer. For MLP, the 6th layer corresponds to the input layer; while for CNNs, the 3rd layer corresponds to the first FC layer after the convolutional blocks.}
  \label{fig:layer_effect}
\end{figure}

For the 6-layer MLP, the results are demonstrated in Fig.~\ref{fig:mlp_insacc} and Fig.~\ref{fig:mlp_clsacc}. It is observed that gradients closer to the output layer yield better label recovery performance. This is because features in the deeper layers are more concentrated and semantically rich, making it easier to estimate them using auxiliary data and to construct an accurate gradient bridge to the output logits. Conversely, gradients from layers closer to the input yield suboptimal results, as the primitive and highly dispersed nature of low-level features poses significant challenges for accurate estimation.

For CNN models, the results are shown in Fig.~\ref{fig:cnn_insacc} and Fig.~\ref{fig:cnn_clsacc}. We find that gradients from different FC layers lead to similar performance in terms of InsAcc and ClsAcc. Moreover, we observe that the accuracies of VGG11 decreases dramatically when using gradients from the first FC layer. This is likely because the input features to the first FC layer in VGG11 are high-dimensional and sparse, making it challenging to estimate them accurately using auxiliary data. Overall, these findings demonstrate that GDBR can effectively utilize gradients from various layers for label recovery, with a preference for deeper layers in MLPs and CNNs.

\subsubsection{Effect of Batch Class Distribution}

We also conduct an in-depth analysis of how varying batch class distributions influence the performance of GDBR. Specifically, we examine five distinct distribution settings: \emph{random}, \emph{uniform}, \emph{single}, \emph{subclassed}, and \emph{imbalanced}. For each setting, we mainly measure the InsAcc of label recovery and present the results in Table~\ref{tab:batch_dist}.

Overall, GDBR demonstrates consistently strong performance across all distribution types, indicating that its effectiveness is not overly sensitive to the composition of the batches. Although variations in the data distribution lead to slight differences in accuracy, the method maintains a high level of reliability in recovering the labels of the target batch. Interestingly, batches that contain samples from only a single class tend to produce the highest accuracy. This improvement is likely due to the concentration of feature patterns within the batch, which reduces intra-batch variability and facilitates more precise estimation of the underlying variables. These observations provide further evidence of the robustness of GDBR, even when faced with diverse and potentially challenging batch composition scenarios.

\begin{table}[htbp]
  \centering
  \caption{Comparison of GDBR's label recovery performance using different class distributions within the training data across various datasets and models combinations.}
  \label{tab:batch_dist}

  \begin{threeparttable}
    \setlength{\tabcolsep}{4.8pt}
    \renewcommand{\arraystretch}{1.35}
    \centering
    \begin{adjustbox}{width=0.96\columnwidth}
      \begin{tabular}{cccccccccccc}
        \specialrule{0.85pt}{2.5pt}{0.35pt}
        \multirow{2}{*}{\textbf{Dataset}} & \multirow{2}{*}{\textbf{Model}} & \multicolumn{5}{c}{\textbf{InsAcc [\%] $\uparrow$}} \\ \cline{3-7}
        &      & R*    & U     & S     & SC    & IM \\
        \specialrule{0.75pt}{0.35pt}{0.35pt}
        MNIST       & LeNet       & 80.7  & 83.0  & 86.3  & 85.0  & 93.5   \\
        SVHN        & AlexNet     & 85.6  & 81.6  & 81.7  & 85.2  & 90.1   \\
        CIFAR-10    & VGG11       & 84.1  & 81.2  & 89.6  & 90.0  & 97.0   \\
        CIFAR-100   & ResNet18    & 89.8  & 88.5  & 94.3  & 87.9  & 90.4   \\
        ImageNet    & ResNet50    & 95.2  & 97.0  & 98.8  & 90.0  & 91.0  \\
        \specialrule{0.85pt}{0.35pt}{0pt}
      \end{tabular}
    \end{adjustbox}

    \begin{tablenotes}
      \small
      \item *\underline{R}andom, \underline{U}niform, \underline{S}ingle, \underline{S}ub\underline{C}lassed, \underline{IM}balanced.
    \end{tablenotes}
  \end{threeparttable}
\end{table}

\subsubsection{Effect of Auxiliary Data vs. Dummy Data}

We further examine the effect of employing auxiliary data as opposed to dummy data in the GDBR framework. As outlined in the implementation details, auxiliary data is sampled from the validation or test set, consisting of 1,000 images that are evenly distributed across all classes, thereby ensuring balanced class coverage. In contrast, dummy data is synthetically generated from a standard Gaussian distribution, without any semantic correspondence to the actual dataset.

The comparative evaluation results, summarized in Table~\ref{tab:aux_dummy}, reveal an interesting trend. For relatively simple datasets such as MNIST, which contain low-resolution grayscale images with limited variability, GDBR can achieve even higher InsAcc when using dummy data. This suggests that, in such cases, the neural network's feature space can be effectively approximated using purely random inputs, making feature estimation sufficiently accurate for label recovery.

However, for more complex datasets involving high-resolution color images and greater intra-class diversity (e.g., CIFAR-100, ImageNet), GDBR consistently performs better when auxiliary data is used. The real auxiliary samples are drawn from the same distribution as the target data. This consistency ensures more representative feature activations and output probability patterns, thereby leading to a more precise estimation of the variables required for label reconstruction.

Importantly, the viability of GDBR extends to scenarios where auxiliary data is unavailable. Our findings reveal that dummy data achieves competitive performance, although a slight performance trade-off is observed with more complex datasets. This highlights the adaptability of GDBR and its potential applicability in situations where access to real auxiliary samples is restricted.

\begin{table}[!t]
  \centering
  \caption{Comparison of GDBR's label recovery performance using auxiliary data vs. dummy data across various datasets and models combinations.}
  \label{tab:aux_dummy}

  \begin{threeparttable}
    \setlength{\tabcolsep}{4.8pt}
    \renewcommand{\arraystretch}{1.4}
    \begin{adjustbox}{width=0.96\columnwidth}
      \centering
      \begin{tabular}{cccccccccccc}
        \specialrule{0.85pt}{2.5pt}{0.35pt}
        \multirow{2}{*}{\textbf{Dataset}} & \multirow{2}{*}{\textbf{Model}} & \multicolumn{2}{c}{\textbf{InsAcc [\%] $\uparrow$}} & \multicolumn{2}{c}{\textbf{ClsAcc [\%] $\uparrow$}} \\ \cline{3-6}
        &           & Aux*        & Dummy          & Aux            & Dummy   \\
        \specialrule{0.75pt}{0.35pt}{0.35pt}
        MNIST       & LeNet      & 81.{\fontsize{6}{5}\selectfont $\pm$5.}   & 83.{\fontsize{6}{5}\selectfont $\pm$6.}   & 98.{\fontsize{6}{5}\selectfont $\pm$4.}   & 95.{\fontsize{6}{5}\selectfont $\pm$5.}   \\
        SVHN        & AlexNet    & 86.{\fontsize{6}{5}\selectfont $\pm$5.}   & 80.{\fontsize{6}{5}\selectfont $\pm$4.}   & 99.{\fontsize{6}{5}\selectfont $\pm$3.}   & 98.{\fontsize{6}{5}\selectfont $\pm$4.}   \\
        CIFAR-10    & VGG11      & 84.{\fontsize{6}{5}\selectfont $\pm$6.}   & 85.{\fontsize{6}{5}\selectfont $\pm$7.}   & 97.{\fontsize{6}{5}\selectfont $\pm$5.}   & 99.{\fontsize{6}{5}\selectfont $\pm$2.}    \\
        CIFAR-100   & ResNet18   & 90.{\fontsize{6}{5}\selectfont $\pm$4.}   & 85.{\fontsize{6}{5}\selectfont $\pm$3.}   & 98.{\fontsize{6}{5}\selectfont $\pm$2.}   & 97.{\fontsize{6}{5}\selectfont $\pm$2.}    \\
        ImageNet    & ResNet50   & 95.{\fontsize{6}{5}\selectfont $\pm$2.}   & 92.{\fontsize{6}{5}\selectfont $\pm$2.}   & 99.{\fontsize{6}{5}\selectfont $\pm$1.}   & 98.{\fontsize{6}{5}\selectfont $\pm$2.}    \\
        \specialrule{0.85pt}{0.35pt}{0pt}
      \end{tabular}
    \end{adjustbox}
  \end{threeparttable}

  \begin{tablenotes}
    \small
    \item *Aux: auxiliary data, Dummy: dummy data.
  \end{tablenotes}
\end{table}

\subsubsection{Effect of Gradient Simulation for Baselines}
\label{sec:grad_sim_init}

We further investigate the impact of different gradient simulation strategies for the baselines. The simulation methods are designed to provide ZLG and LLG with partial or perturbed ground-truth gradients $\nabla{\mathbf{W}}^{[L]}$ of the classification output layer, thereby emulating scenarios where complete and accurate gradients are unavailable. Specifically, we consider the following settings:

\begin{itemize}
  \item \textbf{Stats}: Simulate gradients using global statistical properties (e.g., mean and variance) of the full ground-truth gradient $\nabla\mathbf{W}^{[L]}$, thus retaining coarse distributional information without exact element values.
  \item \textbf{Stats (col)}: Simulate gradients using column-wise statistics of $\nabla\mathbf{W}^{[L]}$, preserving per-class statistical characteristics while obscuring fine-grained element-level details.
  \item \textbf{Shuffle (col)}: Simulate gradients by shuffling elements within each column of $\nabla\mathbf{W}^{[L]}$, maintaining marginal distributions for each class but disrupting positional correlations.
  \item \textbf{Mask}: Randomly mask $50\%$ of the elements in $\nabla\mathbf{W}^{[L]}$ and replace them with the mean value of the remaining elements, effectively reducing the granularity of the available gradient information.
\end{itemize}

These simulation strategies allow us to systematically assess the robustness of GDBR and the sensitivity of baseline methods to incomplete or perturbed gradient signals.

\begin{figure}[htbp]
  \centering
  \includegraphics[width=0.75\columnwidth]{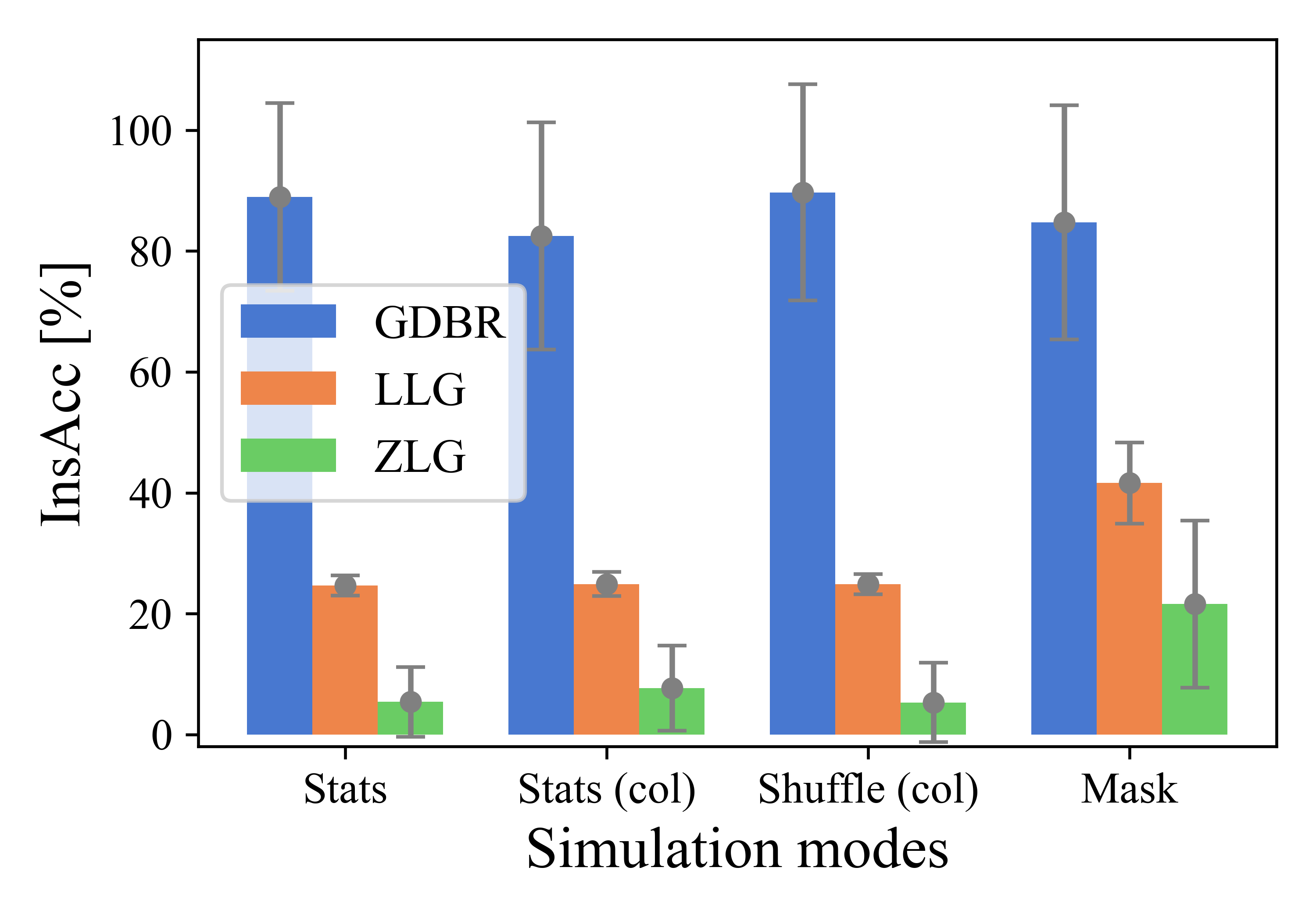}
  \caption{Comparison of gradient simulation strategies for ZLG and LLG baselines. The experiment is conducted on VGG11 trained on CIFAR-10 using auxiliary data.}
  \label{fig:grad_sim_modes}
\end{figure}

As shown in Fig.~\ref{fig:grad_sim_modes}, although the \emph{Mask} strategy achieves marginally better performance than other simulation modes, the proposed GDBR consistently surpasses both ZLG and LLG by a significant margin. Notably, GDBR maintains this superiority without requiring any ground-truth gradient information, even when the baselines are advantaged with partial access to it. This underscores the exceptional efficacy and robustness of GDBR, which manages to restore labels more accurately than baselines that are provided with superior information.

\subsubsection{Effect of Model Initialization Modes}

We further examine the influence of different model initialization modes on GDBR's performance. As outlined in the implementation details, our positive initialization scheme sets the weights of the classification head layers using a uniform distribution in the range $[0.01, 0.2]$, ensuring that all estimated feature values are positive. This design choice aims to simplify downstream feature estimation and reduce sign ambiguity in gradient interpretation. In contrast, the default PyTorch initialization adopts Kaiming uniform~\cite{he2015delving}, producing values drawn from both negative and positive ranges, which more closely reflects typical training scenarios. Additionally, we replace zero-valued features with the mean of the nonzero elements to avoid potential degeneracy in subsequent computations.

\begin{figure}[htbp]
  \centering
  \includegraphics[width=0.75\columnwidth]{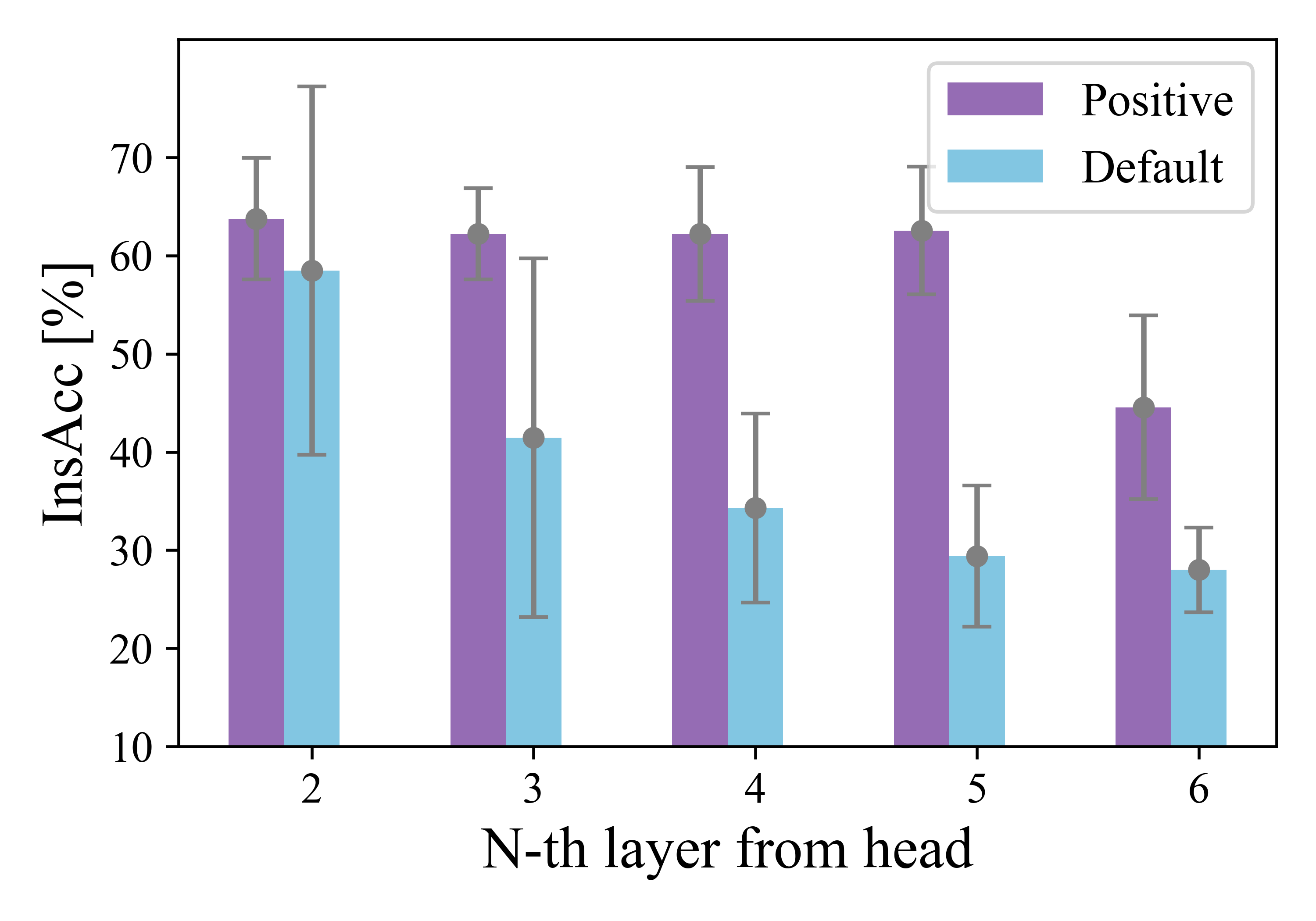}
  \caption{Impact of model initialization modes on GDBR. The experiment uses a 6-layer MLP trained on SVHN and evaluates gradients from different layers.}
  \label{fig:init_modes}
\end{figure}

Using a 6-layer MLP trained on SVHN, we compare GDBR's performance under these two different initialization schemes, utilizing gradients from various layers to evaluate sensitivity to initialization. As demonstrated in Fig.~\ref{fig:init_modes}, GDBR achieves higher accuracy under positive initialization, suggesting that constraining the representation feature signs can provide a modest benefit. However, the observed performance gap remains narrow, indicating that the choice of layer depth plays a far more pivotal role than the initialization scheme. Specifically, even without positive initialization, utilizing gradients from the penultimate FC layer still enables superior label inference, underscoring GDBR's robustness across diverse weight distributions.

\subsection{Generalization to Vision Transformers}
\label{sec:vit_generalization}

To examine whether the leakage exploited by GDBR is limited to conventional MLP and CNN architectures, we further evaluate GDBR on SmallViT using MNIST, SVHN, and CIFAR-10. This experiment is intended as an architectural generalization check: the Transformer backbone differs substantially from convolutional feature extractors, while the final MLP classification head still induces exploitable inter-layer gradient correlations. We use the same FL and auxiliary data settings as in Section~\ref{sec:impl_details}, and additionally test a dummy data setting where the features and output probabilities are estimated from random Gaussian inputs.

\begin{table}[htbp]
  \centering
  \caption{GDBR label recovery performance on SmallViT across three datasets and four batch sizes (16, 32, 64, 128) using auxiliary data and dummy data. Results are reported as mean $\pm$ standard deviation.}
  \label{tab:vit_batch_size}

  \begin{threeparttable}
    \setlength{\tabcolsep}{4.8pt}
    \renewcommand{\arraystretch}{1.35}
    \begin{adjustbox}{width=0.96\columnwidth}
      \centering
      \begin{tabular}{cccccc}
        \specialrule{0.85pt}{2.5pt}{0.35pt}
        \multirow{2}{*}{\textbf{Dataset}} & \multirow{2}{*}{\textbf{Batch}} & \multicolumn{2}{c}{\textbf{Auxiliary Data}} & \multicolumn{2}{c}{\textbf{Dummy Data}} \\ \cline{3-6}
        & & InsAcc $\uparrow$ & ClsAcc $\uparrow$ & InsAcc $\uparrow$ & ClsAcc $\uparrow$ \\
        \specialrule{0.75pt}{0.35pt}{0.35pt}
          \multirow{4}{*}{MNIST} & 16  & 68.6{\fontsize{6}{5}\selectfont $\pm$12.1} & 82.0{\fontsize{6}{5}\selectfont $\pm$11.2} & 55.1{\fontsize{6}{5}\selectfont $\pm$18.4} & 77.5{\fontsize{6}{5}\selectfont $\pm$15.8} \\
          & 32  & 67.0{\fontsize{6}{5}\selectfont $\pm$13.4} & 91.0{\fontsize{6}{5}\selectfont $\pm$9.9}  & 33.9{\fontsize{6}{5}\selectfont $\pm$17.3} & 67.5{\fontsize{6}{5}\selectfont $\pm$13.4} \\
          & 64  & 53.5{\fontsize{6}{5}\selectfont $\pm$20.7} & 82.0{\fontsize{6}{5}\selectfont $\pm$18.1} & 55.1{\fontsize{6}{5}\selectfont $\pm$14.5} & 83.5{\fontsize{6}{5}\selectfont $\pm$11.5} \\
          & 128 & 66.0{\fontsize{6}{5}\selectfont $\pm$19.6} & 92.0{\fontsize{6}{5}\selectfont $\pm$18.3} & 39.7{\fontsize{6}{5}\selectfont $\pm$20.0} & 71.5{\fontsize{6}{5}\selectfont $\pm$17.1} \\
          \specialrule{0.35pt}{0.35pt}{0.35pt}
          \multirow{4}{*}{SVHN} & 16  & 63.6{\fontsize{6}{5}\selectfont $\pm$14.5} & 79.0{\fontsize{6}{5}\selectfont $\pm$11.8} & 39.4{\fontsize{6}{5}\selectfont $\pm$16.2} & 68.0{\fontsize{6}{5}\selectfont $\pm$16.9} \\
          & 32  & 66.5{\fontsize{6}{5}\selectfont $\pm$15.9} & 86.5{\fontsize{6}{5}\selectfont $\pm$11.9} & 41.4{\fontsize{6}{5}\selectfont $\pm$16.0} & 69.5{\fontsize{6}{5}\selectfont $\pm$13.6} \\
          & 64  & 64.0{\fontsize{6}{5}\selectfont $\pm$12.9} & 81.5{\fontsize{6}{5}\selectfont $\pm$9.1}  & 31.8{\fontsize{6}{5}\selectfont $\pm$16.0} & 64.5{\fontsize{6}{5}\selectfont $\pm$16.3} \\
          & 128 & 70.0{\fontsize{6}{5}\selectfont $\pm$16.3} & 91.5{\fontsize{6}{5}\selectfont $\pm$11.9} & 33.5{\fontsize{6}{5}\selectfont $\pm$14.4} & 61.5{\fontsize{6}{5}\selectfont $\pm$14.6} \\
          \specialrule{0.35pt}{0.35pt}{0.35pt}
          \multirow{4}{*}{CIFAR-10} & 16  & 63.6{\fontsize{6}{5}\selectfont $\pm$20.9} & 81.0{\fontsize{6}{5}\selectfont $\pm$13.0} & 39.6{\fontsize{6}{5}\selectfont $\pm$19.2} & 69.0{\fontsize{6}{5}\selectfont $\pm$16.1} \\
          & 32  & 59.5{\fontsize{6}{5}\selectfont $\pm$21.9} & 81.5{\fontsize{6}{5}\selectfont $\pm$16.2} & 32.8{\fontsize{6}{5}\selectfont $\pm$13.4} & 64.0{\fontsize{6}{5}\selectfont $\pm$12.4} \\
          & 64  & 65.2{\fontsize{6}{5}\selectfont $\pm$13.8} & 89.5{\fontsize{6}{5}\selectfont $\pm$12.8} & 38.6{\fontsize{6}{5}\selectfont $\pm$21.3} & 68.0{\fontsize{6}{5}\selectfont $\pm$16.6} \\
          & 128 & 67.3{\fontsize{6}{5}\selectfont $\pm$11.6} & 93.5{\fontsize{6}{5}\selectfont $\pm$11.5} & 34.0{\fontsize{6}{5}\selectfont $\pm$14.2} & 69.5{\fontsize{6}{5}\selectfont $\pm$15.3} \\
        \specialrule{0.85pt}{0.35pt}{0pt}
      \end{tabular}
    \end{adjustbox}
  \end{threeparttable}
\end{table}

As shown in Table~\ref{tab:vit_batch_size}, GDBR remains effective on transformer-based vision neural networks. With auxiliary data, GDBR achieves around 53-70\% InsAcc and 79-94\% ClsAcc across MNIST, SVHN, and CIFAR-10, despite the substantially different Transformer backbone. The dummy-data results are weaker, especially for instance-level recovery, but still reveal non-trivial class-level leakage across all three datasets. These findings indicate that the privacy risk exposed by GDBR is not confined to CNN-style feature extractors; transformer-based vision models with MLP classification heads can also leak label information through partially shared gradients.

\subsection{Performance under Local Updates}
\label{sec:local_updates}

Beyond the standard FedSGD setting, we further evaluate GDBR's efficacy under the FedAvg protocol, which allows clients to perform multiple local optimization steps before communicating updates. We test a local-update configuration where the victim client executes 1, 2, or 4 local SGD steps before sharing model updates. In this scenario, GDBR aims to reconstruct the complete set of labels utilized throughout all local updates from the shared updates. This experiment, conducted using auxiliary data, is intended to assess whether multi-step local updates can mitigate the proposed attack.

\begin{table}[htbp]
  \centering
  \caption{GDBR label recovery performance under local multi-step updates. Three local learning rates are tested for each number of local steps across three datasets and models. Results are reported as mean InsAcc / ClsAcc in percentage.}
  \label{tab:local_updates}

  \begin{threeparttable}
    \setlength{\tabcolsep}{4.8pt}
    \renewcommand{\arraystretch}{1.35}
    \begin{adjustbox}{width=0.96\columnwidth}
      \centering
      \begin{tabular}{cccccc}
        \specialrule{0.85pt}{2.5pt}{0.35pt}
        \multirow{2}{*}{\textbf{Dataset}} & \multirow{2}{*}{\textbf{Model}} & \multirow{2}{*}{\textbf{Steps}} & \multicolumn{3}{c}{\textbf{Local Learning Rate}} \\ \cline{4-6}
        & & & 0.001 & 0.002 & 0.005 \\
        \specialrule{0.75pt}{0.35pt}{0.35pt}
        \multirow{3}{*}{MNIST} & \multirow{3}{*}{LeNet} & 1 & 80.9 / 91.5 & 82.7 / 91.0 & 82.7 / 90.5 \\
        & & 2 & 66.6 / 96.5 & 64.5 / 98.0 & 66.7 / 97.5 \\
        & & 4 & 61.9 / 97.5 & 66.0 / 97.0 & 66.4 / 98.5 \\
        \specialrule{0.35pt}{0.35pt}{0.35pt}
        \multirow{3}{*}{SVHN} & \multirow{3}{*}{AlexNet} & 1 & 68.8 / 92.0 & 71.6 / 93.0 & 73.3 / 94.0 \\
        & & 2 & 55.8 / 84.0 & 49.9 / 86.0 & 16.7 / 84.5 \\
        & & 4 & 26.0 / 84.5 & 10.0 / 79.0 & 10.6 / 64.5 \\
        \specialrule{0.35pt}{0.35pt}{0.35pt}
        \multirow{3}{*}{CIFAR-10} & \multirow{3}{*}{VGG11} & 1 & 56.9 / 83.0 & 60.4 / 86.0 & 60.5 / 85.0 \\
        & & 2 & 69.0 / 96.5 & 69.0 / 97.5 & 61.0 / 85.0 \\
        & & 4 & 62.4 / 97.5 & 65.4 / 96.5 & 28.2 / 83.0 \\
        \specialrule{0.85pt}{0.35pt}{0pt}
      \end{tabular}
    \end{adjustbox}
  \end{threeparttable}
\end{table}

As shown in Table~\ref{tab:local_updates}, local multi-step updates do not eliminate label leakage. On MNIST/LeNet, GDBR preserves high class-level recovery across all local steps and learning rates, with 90.5-98.5\% ClsAcc, while InsAcc decreases from around 81-83\% at one local step to 62-67\% after multiple steps. On CIFAR-10/VGG11, GDBR remains strong for moderate local-update settings, reaching 69.0\% InsAcc and 97.5\% ClsAcc at two local steps, although a larger learning rate with four steps degrades instance-level recovery. On SVHN/AlexNet, instance-level recovery is more sensitive as local updates increase, but class-level leakage remains substantial. Overall, FedAvg-style local updates can reduce exact instance-level recovery in some aggressive settings, but they do not remove the class-level label leakage exploited by GDBR.

\subsection{Performance Against Defense Mechanisms}

We finally evaluate the robustness of GDBR against additional defense mechanisms. We consider two representative defense mechanisms widely discussed in the federated learning literature: \emph{gradient pruning} and \emph{noise perturbation}. These defenses aim to reduce the information leakage from shared gradients by either sparsifying them or injecting random noise. Specifically, we assess GDBR's performance under varying pruning thresholds (e.g., 0.4, 0.6, 0.8, 0.9) and noise scales (e.g., 0.01, 0.05, 0.1, 0.2) to simulate different levels of gradient obfuscation.

For gradient pruning, elements of the gradient whose magnitude falls below a given threshold are set to zero, thereby removing weak signals that could be exploited for label inference. For noise perturbation, Gaussian noise is added to the gradients, with the standard deviation scaled according to the specified noise level, simulating differential privacy-like randomization. 

As illustrated in Fig.~\ref{fig:grad_defense}, GDBR exhibits substantial resilience under moderate defense configurations. However, its performance undergoes a significant decline when confronted with aggressive pruning (e.g., $\geq 0.9$) or intensive noise injection (e.g., $\geq 0.2$). Notably, the ResNet-18 model trained on CIFAR-100 appears disproportionately susceptible to additive noise compared to other architectures. This heightened sensitivity is primarily attributed to the accumulation and amplification of stochastic perturbations during the derivation of gradients for $\mathbf{a}_{k}$ in the penultimate layer. According to
$
\nabla\mathbf{a}_k = \langle \nabla\mathbf{W}_k, \mathbf{W}_k \rangle_\text{F} \oslash \mathbf{a}_k,
$
noise present in $\nabla\mathbf{W}_k$ is magnified by the Frobenius inner product, thereby increasing its disruptive impact on the recovered features.

Overall, these findings indicate that while GDBR is resilient under mild gradient obfuscation, stronger defense mechanisms can meaningfully hinder its label recovery capability. This underscores the necessity for designing more robust and adaptive privacy-preserving strategies in gradient-sharing frameworks.

\begin{figure}[!t]
  \centering
  \begin{subfigure}[b]{0.232\textwidth}
    \centering
    \includegraphics[width=\textwidth]{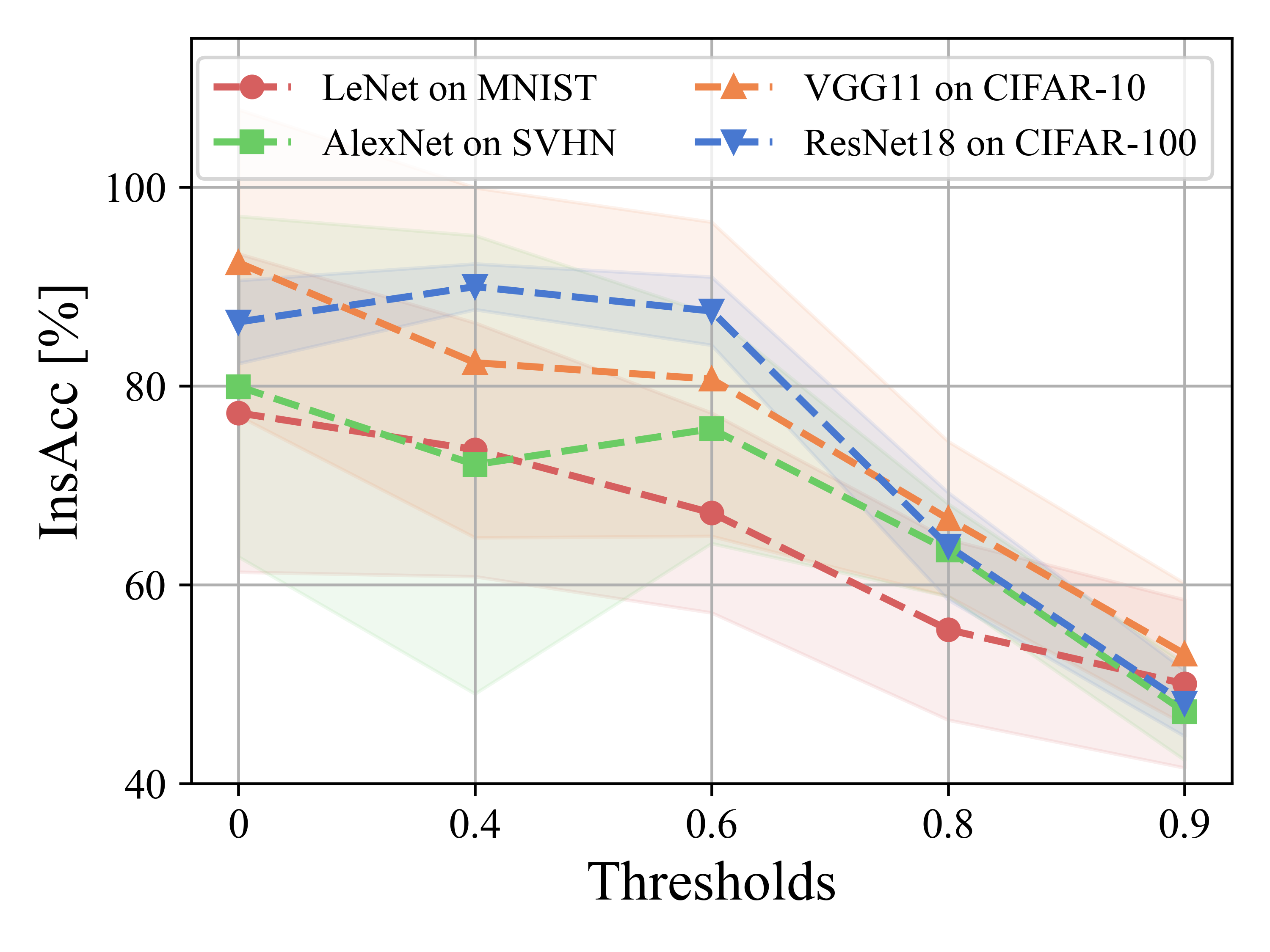}
    \caption{Performance under pruning.}
  \end{subfigure}
  \hfill
  \begin{subfigure}[b]{0.232\textwidth}
    \centering
    \includegraphics[width=\textwidth]{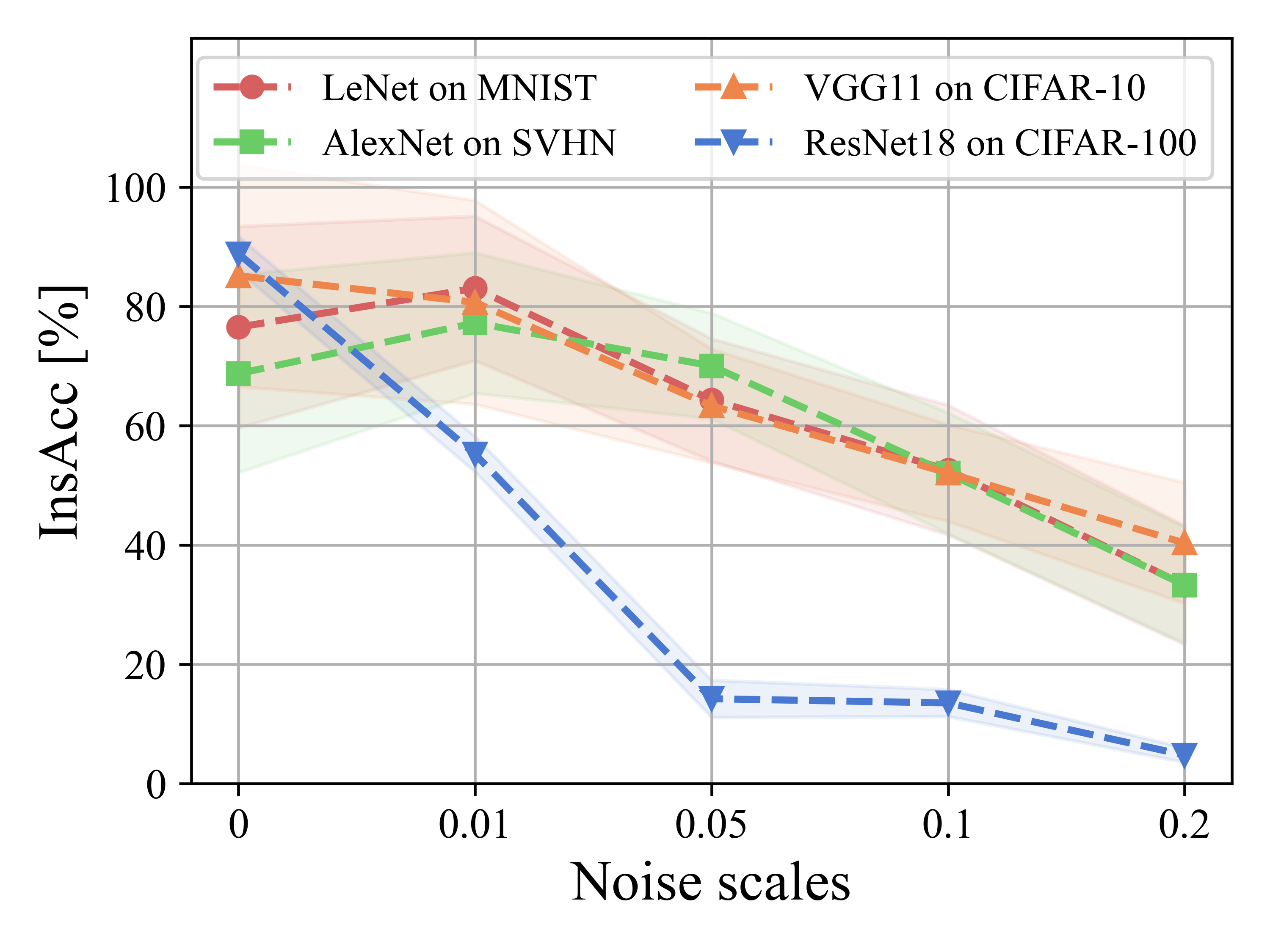}
    \caption{Performance under noising.}
  \end{subfigure}
  \caption{Comparison of gradient pruning and gradient perturbation on InsAcc of GDBR across different models and datasets. The x-axis denotes the pruning thresholds or noise scales, while the y-axis shows the InsAcc. Results are averaged over 20 trials; shaded area shows standard deviation.}
  \label{fig:grad_defense}
\end{figure}

\section{Conclusion}

In this paper, we propose GDBR, a label recovery attack that restores the label distributions of client's batch training data from partial encrypted gradients shared in federated learning (FL). By leveraging the relationships between shared gradients and model parameters, GDBR formulates an effective approach for label restoration. Extensive experiments show that GDBR can effectively and accurately recover the classwise batch labels across various FL settings. In future work, we aim to extend our approach to support a wider range of activation functions, and develop efficient data reconstruction methods based on our theoretical insights. Additionally, we plan to explore and design robust defense mechanisms against GDBR to enhance the security of FL systems.

\section*{Acknowledgment}

This research is partially supported by the RGC Early Career Scheme (Project \#27211524) and Croucher Start-up Allowance (Project \#2499102828). Any opinions, findings, or conclusions expressed in this material are those of the authors and do not necessarily reflect the views of RGC and the Croucher Foundation.

\newpage
\bibliographystyle{IEEEtran}
\bibliography{reference}

\end{document}